\def\eqref#1{equation~\ref{#1}}
\def\1{\bm{1}}
\def\rvu{{\mathbf{i}}}
\def\rvm{{\mathbf{m}}}
\def\rvu{{\mathbf{u}}}
\def\rvv{{\mathbf{v}}}
\def\rvx{{\mathbf{x}}}
\def\rvz{{\mathbf{z}}}
\def\mI{{\bm{I}}}
\DeclareMathAlphabet{\mathsfit}{\encodingdefault}{\sfdefault}{m}{sl}
\SetMathAlphabet{\mathsfit}{bold}{\encodingdefault}{\sfdefault}{bx}{n}
\newcommand{\E}{\mathbb{E}}
\newcommand{\Ls}{\mathcal{L}}
\newcommand{\R}{\mathbb{R}}
\newcommand{\revised}[1]{\textcolor{black}{#1}}
\newcommand{\equalcontrib}{\textsuperscript{*}}
\newcommand{\projectlead}{\textsuperscript{\textdagger}}
\newcommand{\correspondingauthor}{\textsuperscript{\textdaggerdbl}}
\begin{document}

\title{Delving into Latent Spectral Biasing of Video VAEs for Superior Diffusability} 

\titlerunning{SSVAE: Latent Spectral Biasing for Video VAEs}

\author{Shizhan Liu\equalcontrib\inst{1}\orcidlink{0009-0000-8353-8727} \and
Xinran Deng\equalcontrib\inst{1,3}\orcidlink{0009-0009-2619-7554} \and
Zhuoyi Yang\projectlead\inst{2} \and
Jiayan Teng\inst{2}\orcidlink{0000-0003-1346-734X} \and \\
Xiaotao Gu\inst{1} \and
Jie Tang\correspondingauthor\inst{2}
}

\authorrunning{S.~Liu et al.}

\institute{Zhipu AI, Beijing, China \and
Tsinghua University, Beijing, China \and
University of Chinese Academy of Sciences, Beijing, China}

\maketitle
\begingroup
\renewcommand{\thefootnote}{\fnsymbol{footnote}}
\footnotetext[1]{Equal contribution. \projectlead Project leader. \correspondingauthor Corresponding author.}
\endgroup

\begin{abstract}
  Latent diffusion models pair VAEs with diffusion backbones, and the structure of VAE latents strongly influences the difficulty of diffusion training. However, existing video VAEs typically focus on reconstruction fidelity, overlooking latent structure. We present a statistical analysis of video VAE latent spaces and identify two spectral properties essential for diffusion training: a channel-wise eigenspectrum dominated by a few modes, and a spatio-temporal frequency spectrum biased toward low frequencies. To induce these properties, we propose two lightweight, backbone-agnostic regularizers: Latent Masked Reconstruction and Local Correlation Regularization. Experiments show that our Spectral-Structured VAE (SSVAE) achieves a $3\times$ speedup in text-to-video generation convergence and a 10\% gain in video reward, outperforming strong open-source VAEs. Code is available at: \href{https://github.com/zai-org/SSVAE}{https://github.com/zai-org/SSVAE}.
  \keywords{Video VAE \and Diffusability \and Text-to-Video Diffusion Model}
\end{abstract}

\section{Introduction}
\label{sec:intro}
Latent video diffusion models~\cite{wan, cogvideox, hunyuan-video} have recently advanced text-to-video (T2V) generation by coupling a 3D VAE-based tokenizer with a diffusion backbone. A growing body of evidence indicates that the VAE strongly shapes downstream diffusion training dynamics~\cite{SELoss,VA-VAE,MAEtok}.

Unfortunately, most existing video VAEs, including those adopted in SOTA T2V models, pursue better temporal compression and high-fidelity reconstruction through architectural design~\cite{step-video,IV-VAE,WF-VAE} and the optimization of reconstruction-based objectives~\cite{wan,hunyuan-video,cogvideox} (e.g., MSE, adversarial, LPIPS). The latent structure, however, receives limited attention in their optimization. This objective-target mismatch leads to a well-known phenomenon: stronger reconstruction fidelity does not necessarily translate into better generative utility~\cite{VA-VAE}.

\begin{figure}[t]
  \centering
   \includegraphics[width=\linewidth]{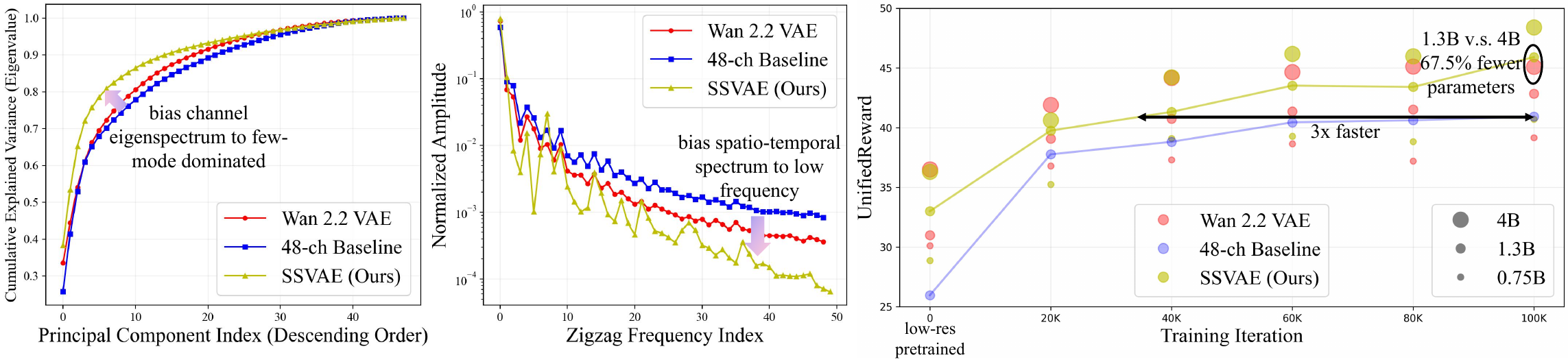}
   \caption{We identify that both a low-frequency biased spatio-temporal frequency spectrum and a few-mode biased channel eigenspectrum facilitate diffusion training. By inducing low-frequency bias, few-mode bias and enhancing decoder robustness, our SSVAE achieves a $3\times$ convergence speedup over the baseline on $17\times 512\times 512$ generation, using prompts from VBench.}
   \label{fig:teaser}
\end{figure}

Prior work on image VAEs has proposed regularizers and analyses from various perspectives, yet the central question remains open: \textbf{What properties should a latent space possess to facilitate downstream generative training?} Existing analyses often rely on indirect proxy metrics, such as ImageNet linear probing accuracy~\cite{MAEtok} or similarity to DINOv2 features~\cite{dinov2,REPA-E}, yielding only an indirect characterization of VAE latents. Optimizing VAEs for these metrics often requires specialized designs. Other representation-level explanations, such as latent clustering~\cite{VA-VAE}, provide useful intuition but lack measurable quantities that connect the intuition to generation quality. Moreover, existing analyses are usually studied in isolation, leaving their overlap and complementarity unclear, which makes it difficult to distill a principled set of `properties beneficial for generation.'

In this work, we seek to identify properties of video VAE latent spaces that facilitate diffusion training, which hereafter referred to as \emph{diffusability}, through statistical analysis of VAE latents. As direct characterizations of the latent space, statistical measures can not only offer actionable targets for designing regularizers, but also shed light on the mechanisms behind existing proxy-metric approaches. Moreover, the overlap and complementarity among various statistical analyses open up possibilities for combining different regularizations. Specifically, for video VAE latents with temporal, height, width, and channel dimensions, we analyze two complementary spectra: the channel-wise eigenspectrum and the spatio-temporal frequency spectrum. \textbf{Our key finding is that biased, rather than uniform, spectra lead to improved diffusability.}

For the eigenspectrum, we show that a few-mode-biased channel-wise eigenspectrum enhances diffusability. In other words, the latent code has a low effective rank and can be closely approximated by a linear combination of only a few basis vectors. By analyzing the learning dynamics of diffusion models along each eigenvector, we reveal the mechanism by which few-mode bias accelerates convergence. Finally, we propose Latent Masked Reconstruction (LMR), which simultaneously promotes few-mode bias and improves decoder robustness, resulting in substantially faster convergence of diffusion models.

We then turn to the complementary spatio-temporal frequency spectrum. SER~\cite{SELoss} has shown that spatial frequency spectra biased toward low-frequency components improve latent-space diffusability, and we extend this analysis to the spatio-temporal domain, demonstrating that two common strategies in image VAEs, scale-equivariant regularization~\cite{SELoss,EQVAE} and foundation-model alignment~\cite{VA-VAE,REPA-E}, both promote low-frequency bias. However, while they introduce non-negligible computational overhead, they do not adequately address the temporal dimension in video latents. From a statistical perspective, we find that the proportion of low-frequency energy is positively correlated with the local spatio-temporal correlation of the latents. This motivates our development of Local Correlation Regularization (LCR), a computationally efficient regularizer that explicitly enhances local correlation, thereby increasing low-frequency energy.

Based on LMR and LCR, we train a \textbf{Spectral-Structured VAE (SSVAE)} that significantly improves diffusability, as shown in Fig.~\ref{fig:teaser}. We summarize our contributions as the following:
\begin{itemize}
\item We present the first detailed statistical analysis of the latent space in video VAEs, revealing two spectral properties that are crucial for generative training: a channel eigenspectrum dominated by few modes, and a spatio-temporal frequency spectrum biased toward low frequencies. Our findings provide new insights into the underlying mechanisms of existing methods and offer actionable design targets for regularization.
\item We propose Latent Masked Reconstruction (LMR) to promote few-mode bias and decoder robustness to noise, and Local Correlation Regularization (LCR) to bias latents toward low frequencies. Both are lightweight, backbone-agnostic, and easy to implement.
\item Extensive experiments across diffusion backbones and parameter scales show that our training recipe substantially accelerates convergence by $3\times$ and improves video reward by $10\%$, consistently outperforming strong open-source VAE baselines for text-to-video.
\end{itemize}

\section{Related Work}
\label{sec:related_works}
\subsection{Video VAE}
Existing video VAEs primarily focus on improving spatio-temporal compression and video reconstruction. Early efforts extend image VAEs to video by leveraging temporal redundancy for compression~\cite{OD-VAE,CV-VAE}. Subsequent work explores keyframe-based temporal compression and grouped causal convolutions to mitigate inter-frame imbalance~\cite{IV-VAE}, and multi-level wavelet transforms to enhance temporal coherence with efficient architectures~\cite{WF-VAE}. In large-scale text-to-video models, 3D CNN-based VAEs have become standard tokenizers, as adopted by CogVideoX~\cite{cogvideox}, HunyuanVideo~\cite{hunyuan-video}, and Wan2.1~\cite{wan}.

Despite solid progress on compression and reconstruction fidelity, most video VAEs are still trained with reconstruction-centric objectives (e.g., L1/L2, LPIPS, adversarial losses), with limited attention to the latent space properties that affect downstream diffusion training. Consequently, they are ill-equipped to address the well-known trade-off between reconstruction and generation~\cite{VA-VAE}. Our work focuses on shaping video VAE latents for generation.

\subsection{Generation-Oriented Image Tokenizer}
Numerous analyses and regularizers have been proposed for generation-oriented image tokenizers. Some works design and optimize proxy metrics linked to VAE latents, such as MAEtok~\cite{MAEtok}, which correlates ImageNet linear probing accuracy with generative quality, and REPA-E~\cite{REPA-E}, which uses diffusion backbone–DINOv2 feature similarity. However, the relationships between proxy metrics and VAE latents are often complex and indirect, making it difficult to identify underlying principles. Recent work also explores how to make reconstruction FID predictive of downstream diffusion generation FID~\cite{xu2026making}, offering an output-level, evaluation-oriented perspective on tokenizer quality rather than a direct VAE optimization objective. Other works visualize the latent space and offer qualitative interpretations. For example, VA-VAE~\cite{VA-VAE} claims latent clustering is detrimental, and DC-AE-1.5~\cite{DC-AE-1.5} emphasizes preserving object structure. Yet their methods only indirectly optimize the objectives motivated by these analyses, which limits the extent to which analytical improvements translate into better generation. There are also works that perform frequency-based analyses, such as SER~\cite{SELoss}, which suppress high-frequency components indirectly by enforcing scale equivariance, but at notable computational cost.

Despite these diverse perspectives, most analyses are conducted in isolation. As a result, a principled set of generative-beneficial properties is still lacking.

\section{Channel Eigenspectrum Shaping}
We first analyze the channel eigenspectrum of video VAEs. We observe that VAEs exhibiting Few-Mode Bias (FMB) and enhanced decoder robustness possess better diffusability. Next, we detail the impact of FMB on diffusion training in Sec.~\ref{sec:FMB_impact}, and analyze its convergence accelerating mechanism from a cross-correlation view in Sec.~\ref{sec:cross_correlation}. Finally, Sec.~\ref{sec:latent_masked_recons} introduces the Latent Masked Reconstruction (LMR) to simultaneously promotes FMB and decoder robustness.

\subsection{Impact of FMB on Diffusion Training}
\label{sec:FMB_impact}
We begin by analyzing the channel eigenspectrum of VAEs with different channel counts, motivated by the well-known observation that downstream generative training converges slower on VAEs with more channels. For a $C$-channel VAE, let $\rvu^0 \in \mathbb{R}^{1\times C}$ denote a latent vector sampled from the normalized latent $\tilde{\rvz}$, where the superscript $0$ denotes the clean latent at diffusion timestep $0$. The channel-wise autocorrelation matrix $\Sigma_{\rvu \rvu} \in \mathbb{R}^{C \times C}$ is then computed as $\Sigma_{\rvu \rvu} = \mathbb{E}[(\rvu^0)^\top \rvu^0]$, where the expectation is taken over the sample and spatio-temporal dimensions. Subsequently, we apply principal component analysis (PCA) to $\Sigma_{\rvu \rvu}$ to analyze its eigenvalue spectrum. We train 48-, 64-, and 128-channel VAEs using standard losses (L1, KL, LPIPS, and adversarial), and compare the cumulative explained variance curves in Fig.~\ref{fig:pca_comparison}. Here, `explained variance' corresponds to eigenvalues sorted in descending order. The differences are substantial: \textbf{higher-channel VAEs tend to distribute eigenvalues more evenly across eigenvectors, whereas lower-channel VAEs concentrate eigenvalues in a few dominant eigenvectors.} We term this phenomenon `Few-Mode Bias' (FMB). Here, `mode' refers to the eigenvector. A highly few-mode biased latent space implies a low effective rank across channels and can be well approximated by a linear combination of a few eigenvectors.

We further investigate the impact of FMB on generation training in VAEs with equal channel counts. We train a baseline 48-channel VAE and a covariance-regularized variant by penalizing the total eigenvalues outside the top three principal components. Our findings are as follows: (i) \textbf{The FVD trend is consistent with the eigenvalue distribution.} The covariance-penalized VAE effectively induces a few-mode bias and achieves a lower FVD, as shown in Fig.~\ref{fig:pca_comparison} and Fig.~\ref{fig:fvd_comparison}. (ii) \textbf{Few-mode biased models achieve higher video rewards}, as depicted in Fig.~\ref{fig:video_reward_ch48}. Based on these observations, we hypothesize that a few-mode biased latent space benefits diffusion training and proceed to analyze the underlying reasons.

\begin{figure*}[t]
  \centering
  \begin{subfigure}[b]{0.19\linewidth}
    \includegraphics[width=\linewidth]{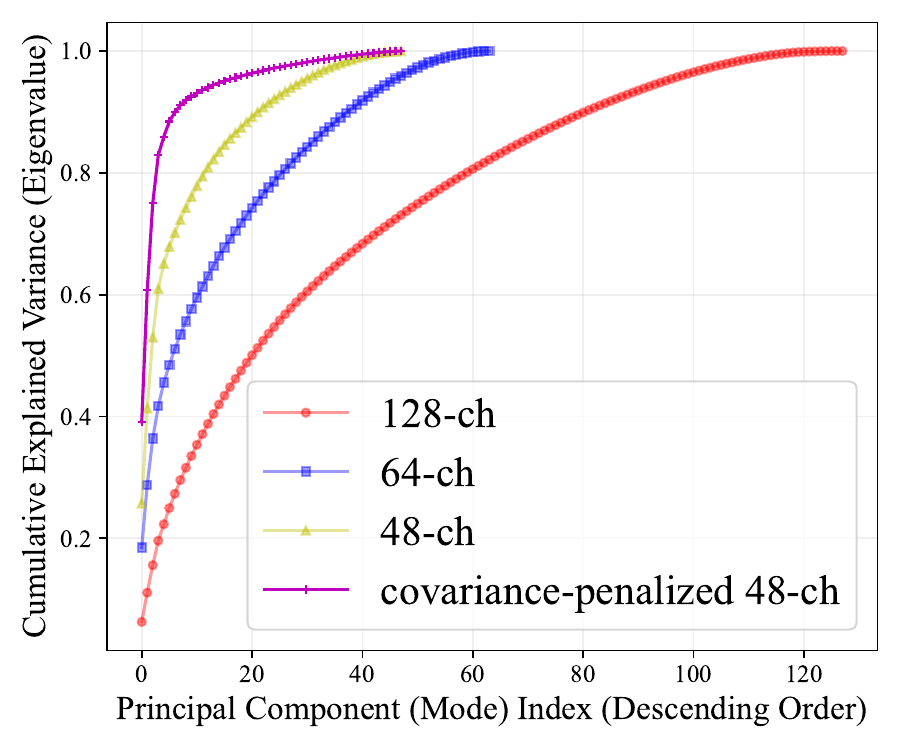}
    \caption{Mode strength comparison of VAE latent covariance.}
    \label{fig:pca_comparison}
  \end{subfigure}\hfill
  \begin{subfigure}[b]{0.19\linewidth}
    \includegraphics[width=\linewidth]{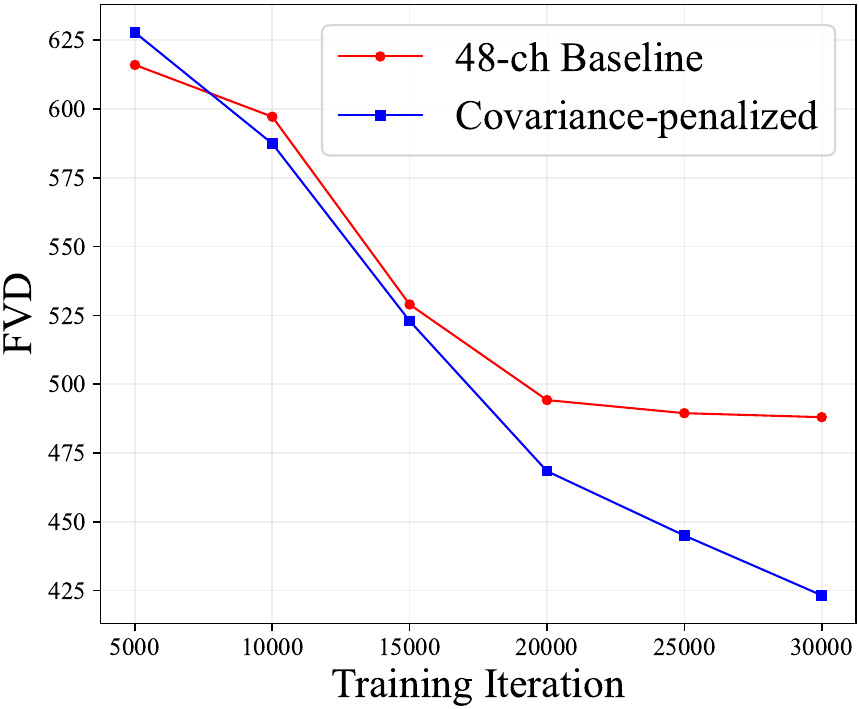}
    \caption{Generation quality comparison by FVD.}
    \label{fig:fvd_comparison}
  \end{subfigure}\hfill
  \begin{subfigure}[b]{0.19\linewidth}
    \includegraphics[width=\linewidth]{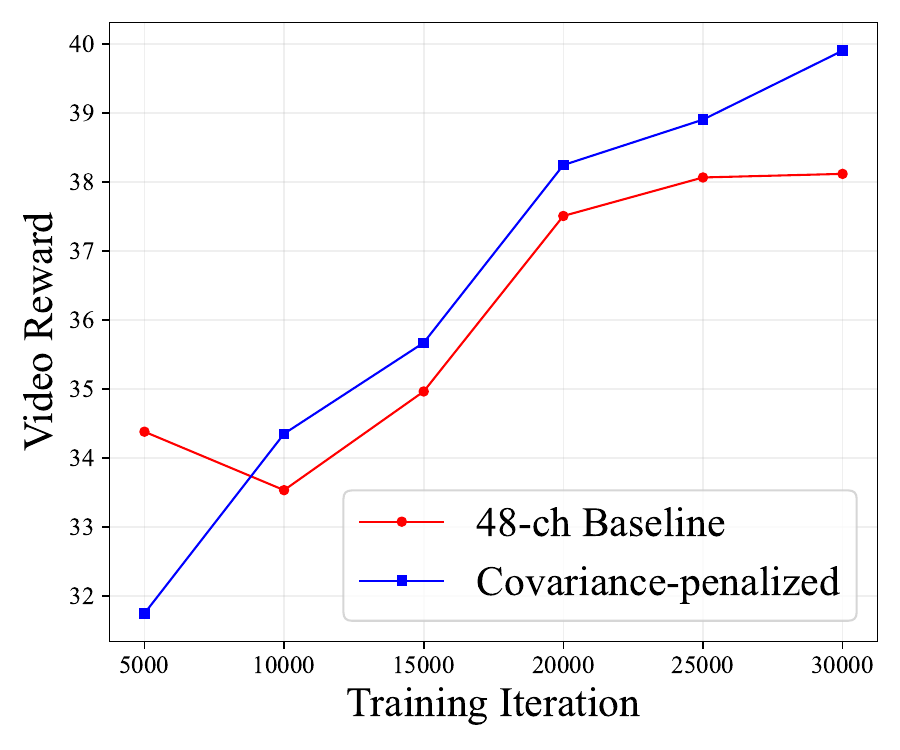}
    \caption{Generation quality comparison by UnifiedReward.}
    \label{fig:video_reward_ch48}
  \end{subfigure}\hfill
  \begin{subfigure}[b]{0.19\linewidth}
    \includegraphics[width=\linewidth]{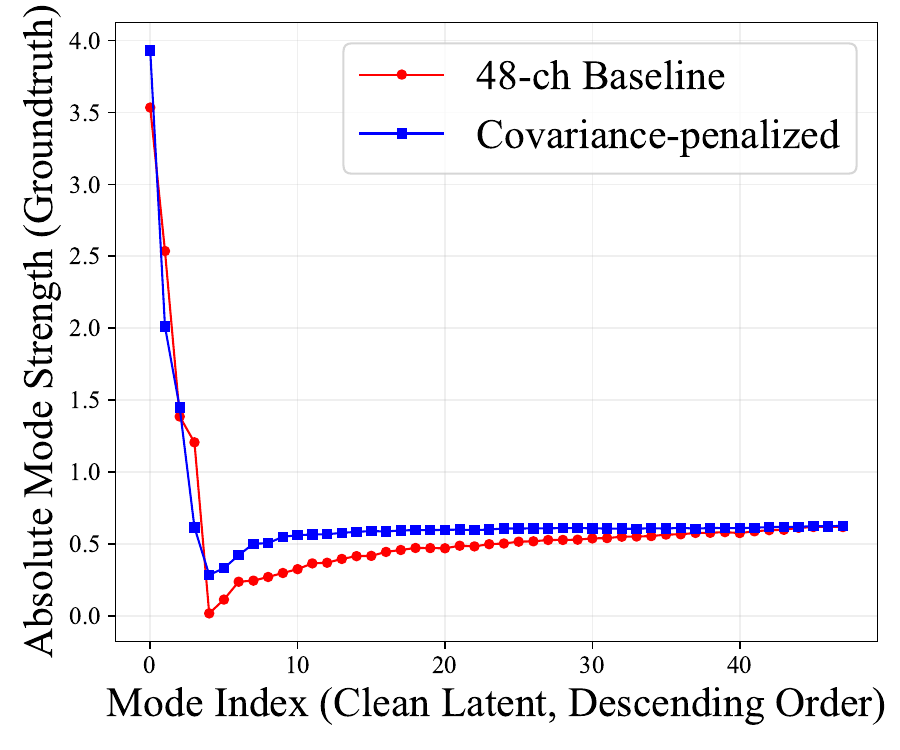}
    \caption{Mode strength comparison of cross correlation.}
    \label{fig:cross_corres_comparison}
  \end{subfigure}\hfill
  \begin{subfigure}[b]{0.19\linewidth}
    \includegraphics[width=\linewidth]{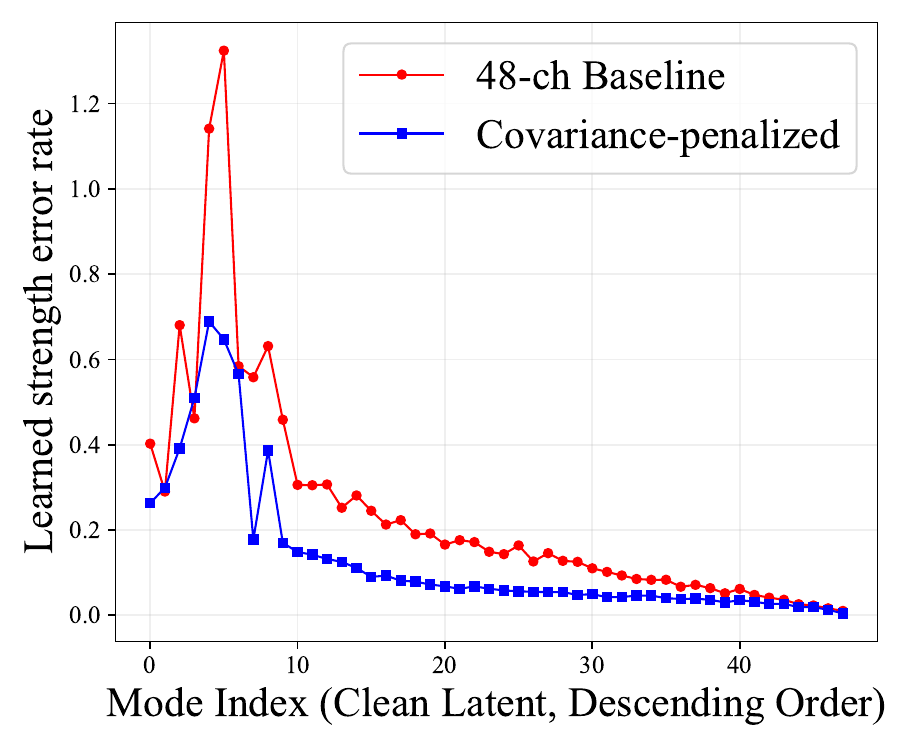}
    \caption{Mode-wise learned error in cross correlation.}
    \label{fig:relative_mode_error}
  \end{subfigure}
  \caption{Comparative analysis of the VAE latent channel covariance matrix and the diffusion output–input cross-correlation matrix. A few-mode-biased latent space is associated with a lower FVD, higher generation quality, and faster convergence.}
  \label{fig:FMB_impact}
\end{figure*}
\subsection{A Cross-Correlation View of FMB}
\label{sec:cross_correlation}
To uncover why a few-mode biased latent space accelerates diffusion training, we analyze the learning dynamics using output-input cross-correlation, which is employed in the convergence analysis framework of deep linear networks~\cite{saxe2013exact} \revised{and has recently been shown to be effective for non-linear diffusion models~\cite{bonnaire2025diffusion}.} Output-input cross-correlation captures the statistical dependencies that the model needs to learn. By analyzing its eigenspectrum, we can quantitatively evaluate training progress for each `correlation mode', namely the eigenvectors of the groundtruth cross-correlation matrix. A mode is considered well learned if the strength of the learned cross-correlation along its corresponding eigenvector approaches the groundtruth eigenvalue.

Specifically, let $\Sigma_{\rvv \rvu}(t) \in \mathbb{R}^{C \times C}$ denote the channel-wise output-input cross-correlation matrix of the diffusion backbone at timestep $t$, and let $\lambda_l$ denote the $l$-th largest eigenvalue of the autocorrelation $\Sigma_{\rvu\rvu}$ that we analyzed in Sec.~\ref{sec:FMB_impact}. Then we have the following theorem under flow-matching and velocity prediction.
\begin{theorem}
\label{thm:eigen_value_relation}
$\Sigma_{\rvv\rvu}(t)$ has the same eigenvectors as $\Sigma_{\rvu\rvu}$, and its eigenvalue on the $l$-th eigenvector of $\Sigma_{\rvu\rvu}$ is given by:
\begin{equation}
s_l(t) = t - (1-t)\lambda_l.
\label{eq:strength_correlation}
\end{equation}
Please refer to Sec.~\ref{sec:proof} for the proof and scope of Theorem~\ref{thm:eigen_value_relation}.
\end{theorem}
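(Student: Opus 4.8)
The plan is to compute $\Sigma_{\rvv\rvu}(t)$ directly from the flow-matching construction and show it is a fixed polynomial in $\Sigma_{\rvu\rvu}$, from which both claims (shared eigenvectors, and the eigenvalue formula $s_l(t)=t-(1-t)\lambda_l$) follow immediately. First I would fix notation: under flow matching with the standard linear interpolation path, the noisy latent at timestep $t$ is $\rvu^t = (1-t)\,\rvu^0 + t\,\rvepsilon$, where $\rvu^0$ is the clean (normalized) latent with $\E[(\rvu^0)^\top\rvu^0]=\Sigma_{\rvu\rvu}$ and $\rvepsilon\sim\mathcal N(0,\rmI_C)$ is independent of $\rvu^0$. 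The velocity-prediction target is $\rvv = \rvepsilon - \rvu^0$ (the time-derivative of the path), so the ``output'' the network must regress onto is $\rvv$ and the ``input'' it sees is $\rvu = \rvu^t$. The object of interest is the cross-correlation $\Sigma_{\rvv\rvu}(t) = \E[\rvv^\top \rvu^t]$, matching the deep-linear-network setup of~\cite{saxe2013exact} where the optimal linear map is governed by the input autocorrelation and the output-input cross-correlation.

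The key step is the expectation computation. Expanding,
\begin{align}
\Sigma_{\rvv\rvu}(t) &= \E\big[(\rvepsilon-\rvu^0)^\top\big((1-t)\rvu^0 + t\,\rvepsilon\big)\big] \nonumber\\
&= (1-t)\,\E[\rvepsilon^\top\rvu^0] + t\,\E[\rvepsilon^\top\rvepsilon] \nonumber\\
&\quad - (1-t)\,\E[(\rvu^0)^\top\rvu^0] - t\,\E[(\rvu^0)^\top\rvepsilon].
\end{align}
Independence and zero mean of $\rvepsilon$ kill the two cross terms $\E[\rvepsilon^\top\rvu^0]$ and $\E[(\rvu^0)^\top\rvepsilon]$; then $\E[\rvepsilon^\top\rvepsilon]=\rmI_C$ and $\E[(\rvu^0)^\top\rvu^0]=\Sigma_{\rvu\rvu}$, giving $\Sigma_{\rvv\rvu}(t) = t\,\rmI_C - (1-t)\,\Sigma_{\rvu\rvu}$. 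Since $\rmI_C$ and $\Sigma_{\rvu\rvu}$ are simultaneously diagonalized by the eigenbasis of $\Sigma_{\rvu\rvu}$, the matrix $\Sigma_{\rvv\rvu}(t)$ has exactly those eigenvectors, and on the $l$-th one its eigenvalue is $t - (1-t)\lambda_l$, which is \eqref{eq:strength_correlation}.

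The main obstacle is not the algebra — which is a one-line second-moment calculation — but pinning down and justifying the modeling conventions so the statement is precise: specifically (i) that ``output-input cross-correlation'' is defined with the clean-input whitening convention used in~\cite{saxe2013exact} (i.e. the relevant operator is $\Sigma_{\rvv\rvu}\Sigma_{\rvu\rvu}^{-1}$, whose learning dynamics along each mode are what actually drive convergence), and (ii) that the normalization ``$\E[(\rvu^0)^\top\rvu^0]=\Sigma_{\rvu\rvu}$'' together with $\rvepsilon\sim\mathcal N(0,\rmI)$ is internally consistent with the per-channel standardization described in Sec.~\ref{sec:FMB_impact} (the diagonal of $\Sigma_{\rvu\rvu}$ need not be $\rmI$ because correlations are taken over the spatio-temporal dimensions, not just the batch). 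I would also remark that the sign pattern $s_l(t)=t-(1-t)\lambda_l$ has the interpretation that large-$\lambda_l$ (dominant) modes carry large-magnitude cross-correlation at small $t$ and hence are learned first, which is the mechanism the subsequent section exploits; this interpretive remark is where I would spend the prose, deferring the trivial computation to the display above.
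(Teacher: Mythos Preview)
Your proof is correct and essentially identical to the paper's own argument: both expand $\E[(\rvepsilon-\rvu^0)^\top((1-t)\rvu^0+t\rvepsilon)]$, drop the cross terms by independence and zero mean of $\rvepsilon$, obtain $\Sigma_{\rvv\rvu}(t)=t\,\rmI-(1-t)\Sigma_{\rvu\rvu}$, and read off the shared eigenbasis and eigenvalues. One small note on your closing interpretive remark: the paper's actual mechanism is that $|\bar s_l|$ (averaged over the logit-normal timestep distribution) is \emph{non-monotonic} in $\lambda_l$, so few-mode bias helps by pushing eigenvalues toward both extremes, not simply because large-$\lambda_l$ modes are learned first.
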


This eigenspectrum relationship bridges the structure of VAE latents and the diffusion model’s training dynamics. Since $\Sigma_{\rvv\rvu}(t)$ and $\Sigma_{\rvu\rvu}$ share eigenvectors, convergence analysis of $\Sigma_{\rvv\rvu}(t)$ allows us to assess the diffusion backbone’s learning progress in each mode of the clean latent. Specifically, to evaluate overall model convergence across all timesteps, we employ Monte Carlo simulation to estimate the expected eigenvalues, $\bar{s}_l$, of the ground-truth cross-correlation matrix, under the widely adopted Logit-normal timestep sampling strategy~\cite{SD3} (see Fig.~\ref{fig:cross_corres_comparison}). By comparing these ground-truth mode strengths $\bar{s}_l$ to those learned by the model, we quantitatively evaluate convergence rates for different correlation modes (see Fig.~\ref{fig:relative_mode_error}). This analysis leads to two key findings: (i) \textbf{Few-mode biasing can result in all absolute mode strengths, $|\bar{s}_l|$, exceeding those of the baseline}. This phenomenon arises from the non-monotonic relationship between $|\bar{s}_l|$ and $\lambda_l$ (see Eq.~\ref{eq:strength_correlation}): as $\lambda_l$ decreases, the corresponding $|\bar{s}_l|$ first decreases and then increases. (ii) \textbf{Modes with larger absolute strengths tend to converge faster}, extending the insights of~\cite{saxe2013exact} to diffusion training. Fig.~\ref{fig:relative_mode_error} shows that the covariance-penalized version has lower errors in the learned strength for each mode than the baseline. 

Taken together, these results suggest that \textbf{few-mode biased latent spaces can accelerate diffusion model convergence by amplifying the absolute mode strengths in the output-input cross-correlation matrix.}

\subsection{Latent Masked Reconstruction}
\label{sec:latent_masked_recons}
To reliably promote a few-mode biased VAE latent space, we introduce Latent Masked Reconstruction (LMR) during training. Unlike the covariance penalization method in Sec.~\ref{sec:FMB_impact}, \textbf{LMR avoids explicit eigenvalue decomposition, reducing computational overhead and improving numerical stability}. The intuition behind LMR is that, to minimize the influence of random masking on reconstruction, the VAE encoder must concentrate essential information into a few modes, thus promoting FMB.

Specifically, LMR randomly replaces latent vectors with mask tokens across spatio-temporal dimensions, feeding the masked latents to the decoder for reconstruction, as shown in Fig.~\ref{fig:LMR}(a). The masking ratio is randomly selected from $\{0, 0.25, 0.5, 0.75\}$, where $0$ indicates no masking. Let $\mathcal{M} \in \mathbb{R}^{T \times H \times W \times 1}$ denote the spatio-temporal mask and $\rvm \in \mathbb{R}^{1 \times 1 \times 1 \times C}$ the mask token. With the broadcast mechanism, this process can be formulated as:
\begin{equation}
\Ls_{LMR} = \mathcal{D}( \rvx, Dec(\rvz \odot \mathcal{M} + (1-\mathcal{M}) \odot \rvm)),
\label{eq:latent_mask_recons}
\end{equation}
here $\rvx$ denotes the video, $\rvz$ its corresponding latent code, $Dec(\cdot)$ the VAE Decoder, and $\odot$ the Hadamard product. \revised{The video reconstructed via LMR is then used to compute the distance metrics, including the L1, LPIPS, and GAN losses.}

Beyond facilitating FMB, an equally important role of LMR is to enhance decoder robustness. Video VAEs in T2V models often use a very small KL loss weight~\cite{wan} to produce latents easier to model. While this sufficiently constrains the posterior mean, it can cause posterior variance collapse. Wan 2.1 VAE, for example, uses a KL loss weight of $1\mathrm{e}{-6}$, yielding a posterior variance as low as $1\mathrm{e}{-7}$. In contrast, the typical reconstruction error in diffusion models is at the $10^{-1}$ magnitude, which makes it difficult for the VAE decoder to generate reasonable videos from highly noisy diffused samples. The masking operation in LMR, which replaces latent features with mask tokens, can also be interpreted as a kind of noise injection. Consequently, LMR enhances both the mode distribution and decoder robustness. We further discuss the influence of decoder robustness in Sec.~\ref{sec:incre_comp_analysis} and Tab.~\ref{tab:component_analysis}.

\revised{To further justify our design choices, we compare LMR with other regularizations. Unlike the image tokenizer MAEtok~\cite{MAEtok},  which performs random masking in pixel space, LMR masks latents directly and is thus compatible with latent regularizers such as LCR. Compared to the Channel-wise Progressive Reconstruction (CPR) in DC-AE-1.5~\cite{DC-AE-1.5}, which randomly retains the first $C'$ channels for reconstruction, we find that random spatial-temporal masking more effectively encourages few-mode bias. Reproducing CPR under the same masking ratios and probabilities shows that it mainly amplifies the primary eigenvalue but distributes the remaining ones more uniformly than the baseline (see Fig.~\ref{fig:LMR}(b)), leading to slower convergence on most modes (Fig.~\ref{fig:LMR}(c)). This may explain why CPR requires \textbf{modifying the diffusion training} for progressive channel-wise denoising, whereas LMR remains effective as a plug-and-play regularization without altering the subsequent diffusion training.}

\begin{figure}[t]
    \centering
    \includegraphics[width=\linewidth]{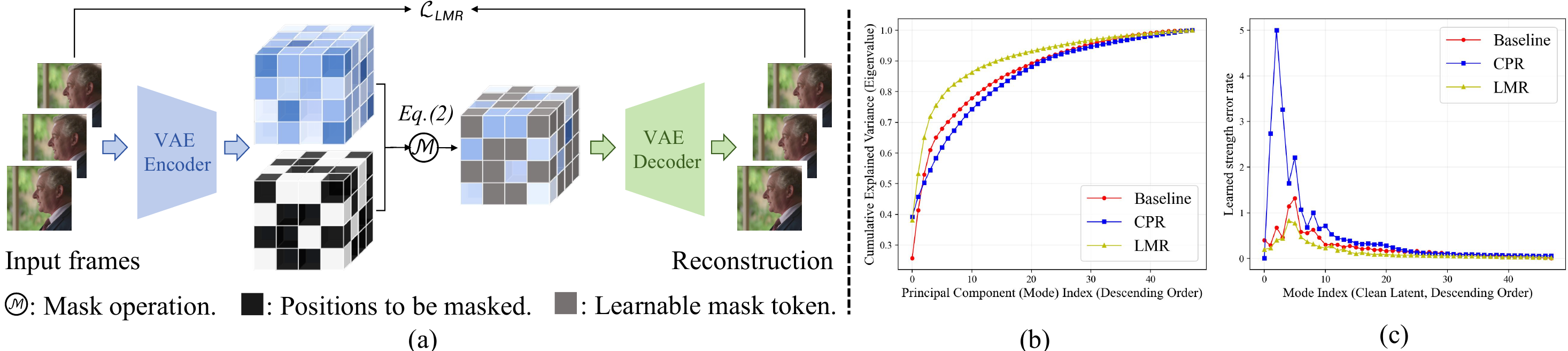}
    \caption{(a) LMR introduces a few-mode bias by reconstructing videos using spatio-temporally masked latents. We omit the channel dimension for simplicity. (b) Cumulative explained variance comparison between CPR and LMR. (c) Eigenspectrum shaping effects comparison of LMR and CPR.}
    \label{fig:LMR}
\end{figure}

\section{Spatio-Temporal Frequency Spectrum Shaping}
Complementary to the channel eigenspectrum, we further analyze the spatio-temporal frequency spectrum. Existing work~\cite{SELoss} has shown that biasing the spatial frequency spectrum toward low frequencies improves diffusability, and we extend this analysis to the spatio-temporal frequency spectrum, identify commonalities and limitations in existing methods, and propose Local Correlation Regularization, a regularizer founded on the statistical link between low-frequency energy and local correlation.

\subsection{Preliminary}
To analyze the frequency characteristics of latents, a widely adopted approach is first apply a frequency transform to map the latents into the frequency domain, and then compute the energy at each frequency to obtain the power spectral density (PSD). Existing work~\cite{SELoss} employs a 2D discrete cosine transform (DCT) to analyze the spatial frequency spectrum, averaging across the temporal and channel dimensions. Following this procedure, we instead adopt a 3D DCT to analyze the spatio-temporal frequency spectrum, averaging only across the channel dimension. For visualization, we linearize the 3D frequency grid via zigzag ordering, bin consecutive frequencies, sum the energy within each bin, and normalize the spectral energy, as depicted in Fig.~\ref{fig:LCR}(a).

SER~\cite{SELoss} observes that \textbf{a spatial spectrum biased toward low frequencies (or, equivalently, the suppression of high-frequency components) correlates with improved diffusion training.} Intuitively, high-SNR low-frequency components facilitate the recovery of low-SNR high-frequency details during denoising, simplifying optimization. We find that this low-frequency biasing principle likewise applies to the spatio-temporal frequency spectrum.

\subsection{From PSD to Local Correlation}
Existing works encourage low-frequency bias by enforcing spatial scale equivariance in latents~\cite{SELoss,EQVAE}. Notably, we observe that foundation-model alignment~\cite{VA-VAE} similarly increases low-frequency energy, although it is not claimed. This may partly explain its effectiveness. However, neither approach addresses temporal frequencies, failing to adequately bias the spatiotemporal spectrum toward low frequencies, as shown in Fig.~\ref{fig:LCR}(a). Moreover, both incur non-trivial computational overhead. Thus, we seek a new regularizer that operates directly on the spatio-temporal spectrum while remaining computationally efficient.

Our key insight is that \textbf{low-frequency energy is largely governed by the similarity of latent vectors at neighboring spatio-temporal positions.} When adjacent positions have highly similar channel vectors, the latent becomes smoother, concentrating more power in low frequencies. This follows the Wiener–Khinchin theorem~\cite{wiener-khincin}, which states that the power spectral density (PSD) and the autocorrelation function form a Fourier pair. Consequently, boosting small-lag correlations increases low-frequency power. Guided by this observation, we propose a simple yet effective Local Correlation Regularization (LCR), which explicitly enhances similarities within small spatio-temporal patches, is computationally efficient, and naturally addresses the temporal dimension. A discussion of the relationships between the autocorrelation function, low-frequency energy, and local correlation, along with a complexity analysis of LCR, is presented in Sec.~\ref{sec:autocorrelation}.

\begin{figure}[t]
  \centering
    \centering
    \includegraphics[width=\linewidth]{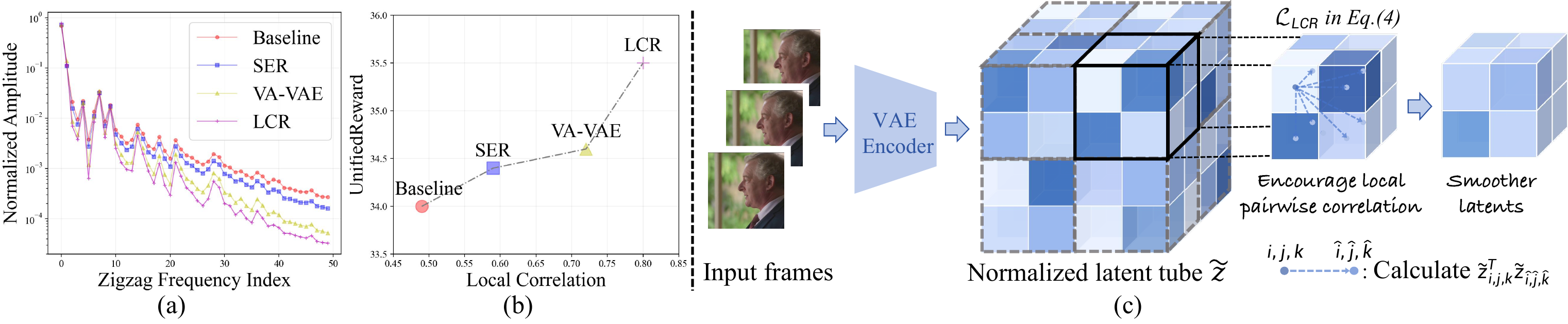}
    \caption{(a) Comparison of PSD curves. Low zigzag frequency indexes correspond to low frequencies. LCR exhibits the strongest but controlled low-frequency bias. (b) Comparison of UnifiedReward over local correlation value. Steeper PSDs correspond to larger local correlation, and result in better video generation quality. (c) LCR introduces a low-frequency bias by promoting pairwise correlations within each spatio-temporal local patch in the normalized latents. We omit the channel dimension for simplicity.}
    \label{fig:LCR}
\end{figure}

Concretely, given a per-channel standardized latent, we partition it spatio-temporally into non-overlapping small patches and measure the average pairwise correlation within each patch. We denote the standardized latent as $\tilde{\rvz}\in \R^{T\times H\times W\times C}$, where $T, H, W, C$ denote temporal length, height, width, and number of channels. During training, the mean and variance of each channel are computed over $(B, T, H, W)$ to match the diffusion backbone’s preprocessing, where $B$ is the batch size. Using $(i,j,k)$ to index time, height and width, and letting $p$ denote the set of spatio-temporal coordinates within a patch, the average local correlation in patch $p$ is
\begin{align}
\tilde R(p) \;=\; & \E_{\substack{(i,j,k),(\hat{i}, \hat{j}, \hat{k})\in p,\\ (i,j,k)\neq(\hat{i}, \hat{j}, \hat{k})}} \big[\, \tilde{\rvz}^\top_{i,j,k}\, \tilde{\rvz}_{\hat{i},\hat{j},\hat{k}} \,\big].
\label{eq:auto_correlation_local}
\end{align}
We define Local Correlation Regularization (LCR) as a hinge loss on the patch-averaged $\tilde R(p)$:
\begin{equation}
\Ls_{LCR} \;=\; \mathrm{ReLU}\!\left(\alpha - \E_p \big[ \tilde R(p) \big]\right),
\label{eq:LCR}
\end{equation}
where $\alpha$ is a saturation threshold that prevents over-smoothing; together with reconstruction losses, it keeps the bias \textbf{controlled}, as excessive bias can hurt generation (see Sec.~\ref{sec:lcr_hyperparameters}). In practice, we set the patch size to 2, dividing spatial patches in the first frame of the latent, and spatio-temporal patches in the remaining frames. \revised{We apply LCR at the \textbf{batch level} rather than per-sample, which preserves the flexibility of local correlations, accommodating videos with high dynamics and complex details}. We further observe that replacing the dot product in Eq.~\ref{eq:auto_correlation_local} with cosine similarity preserves effective autocorrelation optimization while better balancing local correlations between the first and subsequent frames. Accordingly, this choice yields the Pearson correlation coefficient within each local patch and is adopted in our implementation. LCR is also illustrated in Fig.~\ref{fig:LCR}(c).

In Fig.~\ref{fig:LCR}(b), we observe that for 64-channel VAEs, \textbf{local correlation both reflects the steepness of the PSD and correlates positively with the reward model score}. UnifiedReward~\cite{unifiedreward} is employed as the reward model and is found to generally align with human preferences. By effectively increasing low-frequency components (equivalent to suppressing high-frequency components under fixed latent energy), LCR improves generation quality.

\section{Experiments}
\subsection{Experiment Setup}
\textbf{Architecture.} We adopt a ResNet-based 3D VAE with 3D causal convolutions (similar to CogVideoX~\cite{cogvideox}) and extend DC-AE's~\cite{DC-AE} residual autoencoding to 3D for up/down sampling. We find that the architecture primarily impacts reconstruction quality, with limited effects on statistics like frequency or eigenspectrum. Default settings include $16\times$ spatial and $4\times$ temporal downsampling, a diffusion patch size of $(1\times 1\times 1)$, and a latent dimension of 48. This follows recent high-compression VAEs~\cite{wan}, which fold the usual $2\times2$ patchification of 16-channel, $8\times$ spatial-downsampled latents into the encoder. For existing video VAEs~\cite{wan,hunyuan-video,step-video,cogvideox}, the spatial patch size is set to 2 for those with $8\times$ spatial downsampling, and 1 otherwise.

\textbf{Data.} We mainly use open-source datasets to train VAEs: laion2B-en~\cite{laion} and COYO~\cite{coyo-700m} for images, and webvid~\cite{webvid} and panda70M~\cite{chen2024panda70m} for videos. For video generation, we train with an internal collection of movie and television clips, reserving 1,348 videos as a disjoint validation set, referred to as \textbf{MovieValid}. It covers diverse content, including landscapes, humans, animals, plants, architecture, artworks, black-and-white films, and cartoons.

\textbf{Evaluation.} Videos are generated with prompts from VBench (GPT-enhanced version)~\cite{vbench}, MovieGenBench~\cite{moviegenbench}, and MovieValid. We assess video quality using reward models and FVD. \revised{Given the well-documented shortcomings of FVD for T2V evaluation, including its preference for temporal inconsistency and artifacts~\cite{ge2024content}, as well as its inability to reflect multi-faceted quality~\cite{vbench}, we prioritize reward models such as UnifiedReward~\cite{unifiedreward} and VideoAlign~\cite{videoalign}, which achieve high human agreement (70–80\%)~\cite{unifiedreward}.} UnifiedReward~\cite{unifiedreward} scores are reported as `UR', and VideoAlign~\cite{videoalign} scores as `VAR'. Since VideoAlign’s raw rewards are unbounded logits, we apply the sigmoid function to each evaluation aspect and average the results to obtain a score in the 0–100 range.

\textbf{Training Details.} We train our VAE on a mixed image-video dataset with a $6{:}7$ ratio. The loss function combines $\Ls_{LMR}$, KL, LPIPS, GAN, and $\Ls_{LCR}$ with weights $1$, $5\mathrm{e}{-4}$, $1$, $1$, and $0.02$, respectively. For $\Ls_{LCR}$, we use the dynamic gradient-based weighting scheme from~\cite{VA-VAE} with threshold $\alpha=0.75$. For $\Ls_{LMR}$, mask ratios $\{0, 0.25, 0.5, 0.75\}$ are assigned selection probabilities $\{0.7, 0.1, 0.1, 0.1\}$. Prior to input to the diffusion backbone, VAE latents are normalized using the mean and standard deviation from MovieValid. Following SD3~\cite{SD3}, we train diffusion models under the flow-matching and velocity prediction setting, sampling timesteps from a logit-normal distribution.

\begin{table*}[t]
    \centering
    \caption{Generation comparison across various video VAEs. `CPR' refers to the spatial and temporal downsampling factors of the VAEs, and `Chn' is short for channel number. The best generation results are \textbf{bolded}, and the second-best are \underline{underlined}.}
\resizebox{\textwidth}{!}{%
    \begin{tabular}{c|c|ccc|ccc|ccc}
    \toprule
    \multirow{2}{*}{\textbf{Method}} & \multirow{2}{*}{\makecell{\textbf{CPR, Chn}}} & \multicolumn{3}{c|}{\bfseries VBench} & \multicolumn{3}{c|}{\bfseries MovieGenBench} & \multicolumn{3}{c}{\bfseries MovieValid}\\
    \cmidrule(lr){3-5} \cmidrule(lr){6-8} \cmidrule(lr){9-11}
    & & \bfseries UR$\uparrow$ & \bfseries VAR$\uparrow$ & \bfseries FVD$\downarrow$ & \bfseries UR$\uparrow$ & \bfseries VAR$\uparrow$ & \bfseries FVD$\downarrow$ & \bfseries UR$\uparrow$ & \bfseries \bfseries VAR$\uparrow$ & \bfseries FVD$\downarrow$\\
    \midrule
    \rowcolor{gray!10}
    & & \multicolumn{9}{c}{\textit{MMDiT 1.3B, $17\times 256\times 256$}}\\
    \midrule
    WF-VAE~\cite{WF-VAE}      & ($8\times 8\times 4$), 16 & 26.5 & 23.7 & \underline{978}  & 25.5 & 20.8 & 859 & 30.6 & 27.9 & 514\\
    IV-VAE~\cite{IV-VAE}      & ($8\times 8\times 4$), 16 & \underline{34.0} & \underline{27.8} & 1066 & \underline{28.8} & 22.0 & 784 & \underline{37.7} & \underline{30.9} & 541\\
    Hunyuan VAE~\cite{hunyuan-video} & ($8\times 8\times 4$), 16 & 29.7 & 25.9 & 1354 & 26.3 & 21.1 & 1169& 32.1 & 28.9 & 610\\
    CogvideoX VAE~\cite{cogvideox}   & ($8\times 8\times 4$), 16 & 30.4 & 26.6 & 1040 & 26.7 & 21.4 & 826 & 32.6 & 29.9 & 543\\
    Wan 2.1 VAE~\cite{wan} & ($8\times 8\times 4$), 16 & 33.3 & 27.2 & 986  & 28.5 & 22.1 & 765 & 36.5 & 30.8 & 517\\
    StepVideo VAE~\cite{step-video}  & ($16\times 16\times 8$), 64 & 29.7 & 25.6 & 1004 & 26.3 & 20.6 & 742 & 31.8 & 28.8 & \textbf{418}\\
    Wan 2.2 VAE~\cite{wan} & ($16\times 16\times 4$), 48 & 33.6 & \textbf{27.9} & \textbf{873}  & 28.4 & \textbf{22.8} & \underline{716} & 36.7 & 30.8 & \underline{432}\\
    Baseline                  & ($16\times 16\times 4$), 48 & 32.4 & 27.1 & 909 & 27.8 & 21.6 & 798 & 36.3 & 30.9 & 455\\
    \rowcolor{blue!8}
    SSVAE (Ours)              & ($16\times 16\times 4$), 48 & \textbf{34.7} & 27.7 & 983  & \textbf{30.2} & \underline{22.6} & \textbf{703} & \textbf{39.1} & \textbf{31.6} & 511\\
    \midrule
    \rowcolor{gray!10}
    & & \multicolumn{9}{c}{\textit{MMDiT 1.3B, $17\times 512\times 512$}}\\
    \midrule
    WF-VAE       & ($8\times 8\times 4$), 16 & 36.3 & 27.9 & 1352& 29.2 & 22.2 & 894 & 39.8 & 31.3 & 667\\
    IV-VAE       & ($8\times 8\times 4$), 16 & 41.8 & 29.8 & 997 & \underline{34.1} & 23.4 & 776 & 46.4 & 33.2 & 535\\
    Hunyuan VAE    & ($8\times 8\times 4$), 16 & 36.9 & 28.1 & 1166& 31.9 & 23.5 & 892 & 41.4 & 31.5 & 663\\
    CogvideoX VAE  & ($8\times 8\times 4$), 16 & 39.0 & 29.1 & \underline{971} & 31.2 & 23.4 & \underline{675} & 41.3 & 32.5 & \underline{452}\\
    Wan 2.1 VAE    & ($8\times 8\times 4$), 16 & 41.6 & 29.8 & 978 & 33.6 & \underline{23.7} & 791 & 46.6 & 33.9 & 500\\
    StepVideo VAE  & ($16\times 16\times 8$), 64 & 35.6 & 28.0 & 1190& 29.4 & 21.6 & 866 & 37.0 & 30.9 & 562\\
    Wan 2.2 VAE    & ($16\times 16\times 4$), 48 & \underline{42.8} & \underline{30.6} & 1019 & 33.4 & \underline{23.7} & 733 & \underline{47.9} & \underline{34.0} & 502\\
    Baseline     & ($16\times 16\times 4$), 48 & 40.9 & 29.1 & 970 & 32.9 & 23.3 & 681 & 46.5 & 33.9 & 504\\
    \rowcolor{blue!8}
    SSVAE (Ours) & ($16\times 16\times 4$), 48 & \textbf{45.9} & \textbf{30.7} & \textbf{828} & \textbf{35.7} & \textbf{24.3} & \textbf{605} & \textbf{50.8} & \textbf{34.8} & \textbf{403}\\
    \midrule
    \rowcolor{gray!10}
    & & \multicolumn{9}{c}{\textit{MMDiT 1.3B, $81\times 256\times 256$}}\\
    \midrule
    Wan 2.2 VAE & ($16\times 16\times 4$), 48 & 35.9 & 30.8 & 584 & 29.8 & 23.1 & 600 & 40.1 & 35.2 & \textbf{247}\\
    \rowcolor{blue!8}
    SSVAE (Ours) & ($16\times 16\times 4$), 48 & \textbf{37.8} & \textbf{33.2} & \textbf{560} & \textbf{31.6} & \textbf{24.5} & \textbf{484} & \textbf{40.8} & \textbf{36.8} & 272\\
    \midrule
    \rowcolor{gray!10}
    & & \multicolumn{9}{c}{\textit{Wan 1.3B, $17\times 512\times 512$}} \\
    \midrule
    Wan 2.2 VAE & ($16\times 16\times 4$), 48 & 37.7 & 27.3 & 1124 & 30.6 & 23.2 & 736 & 42.3 & 31.6 & 601\\
    \rowcolor{blue!8}
    SSVAE (Ours) & ($16\times 16\times 4$), 48 & \textbf{45.5} & \textbf{29.5} & \textbf{823} & \textbf{34.5} & \textbf{24.9} & \textbf{691} & \textbf{48.6} & \textbf{33.5} & \textbf{466}\\
    \bottomrule
    \end{tabular}
}
\label{tab:main_results}
\end{table*}
\subsection{Comparison with Existing Video VAEs}
We train 48-channel VAEs with LMR and LCR (`SSVAE (Ours)') and without them (`Baseline'). SSVAE is initially trained for 150k steps at $256 \times 256$ resolution. \revised{Subsequently, it is fine-tuned for 50k steps at $512 \times 512$ resolution with a fixed encoder and both LCR and LMR disabled to enhance high-resolution reconstruction.} Generation for $17\times 256\times 256$ and $81\times 256\times 256$ settings uses a 1.3B MMDiT-like~\cite{SD3} diffusion model trained for 50k steps on images, followed by a further 50k steps on 17-frame or 81-frame videos, respectively. For $17\times 512\times 512$, the model is initialized from the $17\times 256\times 256$ checkpoint and further trained for 100k steps on $512\times 512$ videos. \revised{A total of 200k generation steps matches or exceeds the training budget used in prior works~\cite{WF-VAE,IV-VAE}.}

\textbf{Generation Comparison across Resolutions and Backbones}. Spectra of 48-channel VAEs are shown in Fig.~\ref{fig:teaser}, while Tab.~\ref{tab:main_results} compares the generation quality of SSVAE to existing SOTA video VAEs. Key observations include:
(i) \textbf{SSVAE improves diffusability by a large margin}. Visual reward models align more closely with human judgments~\cite{unifiedreward}, and SSVAE achieves the highest reward scores across almost all settings and datasets. For $17\times 512\times 512$ generation, it achieves a $3\times$ UR convergence speedup over the 48-channel baseline (Fig.~\ref{fig:teaser}). Although FVD is mixed at 256p, SSVAE improves FVD at 512p and on 81-frame VBench/MovieGenBench, and the overall reward-model and human-preference results support SSVAE. 
(ii) \textbf{Generation quality (diffusability) is highly correlated with spectral properties}. Comparing the 48-channel Baseline, Wan 2.2, and SSVAE in Fig.~\ref{fig:teaser}, we observe that video reward scores consistently improve as spectral bias is enhanced, highlighting the significance of spectrum biasing;
(iii) \textbf{The advantages of SSVAE become more pronounced with longer denoising sequences}. Compared to the previous SOTA Wan 2.2 VAE, SSVAE yields an average improvement of 3.4 in UR and 1.4 in VAR across $17\times 512\times 512$ and $81\times 256\times 256$ generation tasks, and across all datasets. This may be primarily attributed to the ability of few-mode bias to accelerate convergence.
(iv) \textbf{SSVAE is diffusion-backbone-agnostic}. It consistently improves generative quality regardless of the diffusion backbone used, including both MMDiT~\cite{SD3} and Wan~\cite{wan}.

\textbf{Generation Performance across Model Scales}. We further compare SSVAE with Wan 2.2 VAE across various diffusion backbone model sizes, including 0.75B, 1.3B, and 4B parameters. As shown in Fig.~\ref{fig:teaser}, our experiments on VBench at $17\times 512\times 512$ demonstrate that SSVAE with a 1.3B diffusion backbone achieves a higher UnifiedReward score than Wan 2.2 VAE with a 4B diffusion model, while using 67.5\% fewer parameters. Fig.~\ref{fig:recons_gen_vis}(a) shows the generation results for the 4B models, where our method achieves higher visual quality and better alignment with the prompts.

\begin{figure}[t]
    \centering
    \includegraphics[width=0.85\linewidth]{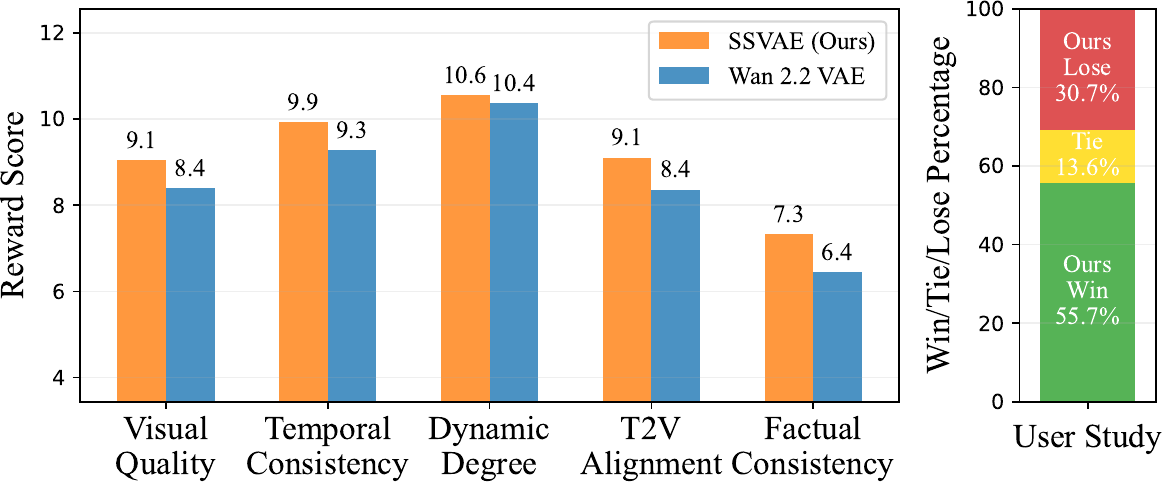}
    \caption{Human preference and UnifiedReward dimension breakdown.}
    \label{fig:breakdown_ur_user_study}
\end{figure}

\textbf{Human-Preference Validation}. Fig.~\ref{fig:breakdown_ur_user_study} details the ``MMDiT 1.3B, $17 \times 512 \times 512$'' setting from Tab.~\ref{tab:main_results}: SSVAE improves all five UnifiedReward dimensions and obtains a 55.7\% win rate in a 1,200-pair user study using MovieGenBench prompts.

\begin{table}[t]
    \centering
    \begin{minipage}[c]{0.61\linewidth}
        \centering
        \caption{Incremental component analysis on \revised{VBench} and MovieValid.}
        \resizebox{\linewidth}{!}{
            \begin{tabular}{lcccccc}
                \toprule
                \multirow{2}{*}{\bfseries Components} & 
                \multicolumn{3}{c}{\bfseries VBench} & 
                \multicolumn{3}{c}{\bfseries MovieValid} \\
                \cmidrule(lr){2-4} \cmidrule(lr){5-7}
                 & \bfseries UR & \bfseries VAR & \bfseries FVD 
                 & \bfseries UR & \bfseries VAR & \bfseries FVD \\
                \midrule
                Baseline & 36.3 & 26.8 & 1069 & 38.1 & 30.6 & 487\\
                +\ding{172}LCR & 36.8 & 27.3 & 993 & 39.0 & 30.6 & 472\\
                +\ding{172}+\ding{173}Covariance Penalize & 37.4 & 27.6 & 980 & 39.9 & 30.9 & 467\\
                +\ding{172}+\ding{173}+\ding{174}Decoder Finetune & 38.0 & 27.9 & 961 & 40.7 & 31.3 & 451\\
                +\ding{172}+\ding{175}LMR & \textbf{39.1} & \textbf{28.2} & \textbf{958} & \textbf{41.4} & \textbf{31.7} & \textbf{450}\\
                +\ding{172}+\ding{175}+\ding{174} & 38.9 & 28.0 & 960 & 41.3 & 31.6 & 450\\
                \bottomrule
            \end{tabular}
        }
        \label{tab:component_analysis}
    \end{minipage}
    \hfill
    \begin{minipage}[c]{0.37\linewidth}
        \centering
        \caption{Comparison of frequency spectrum biasing techniques. `LC' denotes local correlation computed on MovieValid.}
        \resizebox{\linewidth}{!}{
            \begin{tabular}{lcccc}
                \toprule
                \bfseries Components & \bfseries LC & \bfseries UR & \bfseries VAR & \bfseries FVD \\
                \midrule    
                64-ch Baseline & 0.49 & 34.0 & 28.4 & 553 \\
                +SER & 0.59 & 34.4 & 28.5 & \textbf{541} \\
                +VA-VAE & 0.73 & 34.6 & 28.6 & 546 \\
                \rowcolor{blue!8}
                +LCR & \textbf{0.80} & \textbf{35.5} & \textbf{29.2} & 606 \\
                \midrule
                48-ch Baseline & 0.67 & 38.1 & 30.6 & 487 \\
                \rowcolor{blue!8}
                +LCR & \textbf{0.80} & \textbf{39.0} & \textbf{30.6} & \textbf{472} \\
                \bottomrule
            \end{tabular}
        }
        \label{tab:freq_bias_ablation}
    \end{minipage}
\end{table}

\subsection{Ablation Studies}
All VAEs in the ablation experiments are trained for 100k steps at $256\times 256$ resolution, which we find sufficient for high-resolution generation, although it slightly compromises reconstruction quality. We then pre-train 1.3B MMDiT models on $512\times 512$ images for 40k steps, followed by finetuning on $17\times 512\times 512$ videos for 30k steps. Performance is evaluated mostly on the MovieValid benchmark, as its data distribution matches the training set and its prompts are derived from real videos.

\begin{figure*}[t]
  \centering
   \includegraphics[width=\linewidth]{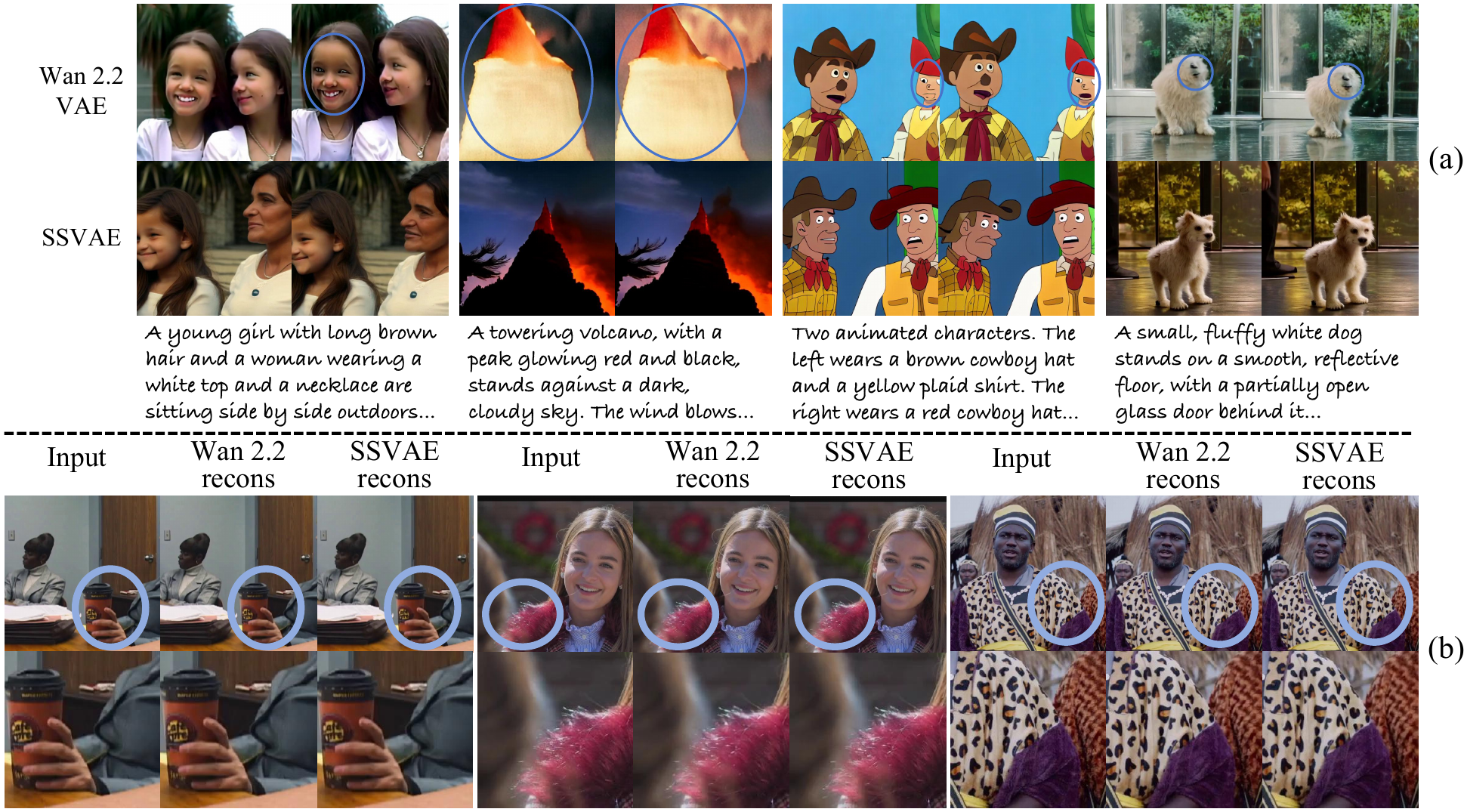}
   \caption{(a) A comparison of $17\times 512\times 512$ video generation using 4B diffusion models trained with SSVAE and Wan 2.2 VAE. For simplicity, only the first and last frames are shown. Prompts are sampled from MovieValid, and artifacts are highlighted in \textcolor{blue}{blue}. Our results demonstrate higher visual quality, fewer artifacts, and better alignment with the prompts. (b) Reconstruction comparison of SSVAE and Wan 2.2 VAE.}
   \label{fig:recons_gen_vis}
\end{figure*}

\textbf{Incremental Component Analysis.} We ablate the effects of low-frequency biasing, few-mode biasing, and decoder robustness using a 48-channel setting; results are shown in Tab.~\ref{tab:component_analysis}. `\textit{Covariance Penalize}' refers to the few-mode-forcing trick in Sec.~\ref{sec:FMB_impact}, where the sum of all eigenvalues except the largest three are minimized. `\textit{Decoder Finetune}' adds a training stage where the encoder is fixed and a Gaussian noise with $5\mathrm{e}-3$ variance is injected into the latents to finetune the decoder for improved robustness. We observe that: (i) Low-frequency bias, few-mode bias, and decoder finetuning complement each other in boosting generation quality; (ii) Simply applying decoder finetuning to a covariance-penalized VAE yields performance improvements. However, LMR achieves further gains by jointly promoting few-mode bias and enhancing decoder robustness through end-to-end training. Additional finetuning of the decoder after applying LMR offers no further benefit.
\label{sec:incre_comp_analysis}

\begin{table}[t]
\centering
\begin{minipage}[c]{0.42\textwidth}
\centering
\caption{Comparison of reconstruction results on MovieValid at a resolution of $17\times 512\times 512$.}
\resizebox{\textwidth}{!}{%
\begin{tabular}{l|ccc}
\toprule
\textbf{Method} & \textbf{PSNR}$\uparrow$ & \textbf{SSIM}$\uparrow$ & \textbf{LPIPS}$\downarrow$ \\
\midrule
Wan 2.1 VAE     & 38.57 & 0.97 & 0.0231\\
Wan 2.2 VAE     & 39.30 & 0.97 & 0.0206\\
WF-VAE          & 38.61 & 0.96 & 0.0352\\
IV-VAE          & 40.29 & 0.97 & 0.0220\\
Hunyuan VAE     & 40.31 & 0.97 & 0.0209\\
CogvideoX VAE   & 36.65 & 0.96 & 0.0398\\
StepVideo VAE   & 36.78 & 0.95 & 0.0435\\
\midrule
48-ch Baseline  & 38.07 & 0.96 & 0.0320\\
\rowcolor{blue!8}
SSVAE (Ours)    & 37.84 & 0.96 & 0.0325\\
\bottomrule
\end{tabular}
}
\label{tab:recons_comparison}
\end{minipage}\hspace{0.01\textwidth}
\begin{minipage}[c]{0.56\textwidth}
\centering
\caption{\revised{Impact of LCR and LMR on MovieValid reconstruction quality at a resolution of $17\times 512\times 512$. All VAEs are trained under the same regime.}}
\resizebox{\textwidth}{!}{%
\begin{tabular}{l|cccc}
\toprule
\textbf{Method} & \textbf{Param.} & \textbf{PSNR}$\uparrow$ & \textbf{SSIM}$\uparrow$ & \textbf{LPIPS}$\downarrow$ \\
\midrule
Wan 2.2 VAE-retrain  & 705M & 38.41 & 0.96 & 0.0312\\
\midrule
48-ch Baseline        & 315M & 38.07 & 0.96 & 0.0320\\
+LCR                  & 315M & 38.29 & 0.96 & 0.0315\\
+LCR+LMR              & 315M & 37.84 & 0.96 & 0.0325\\
\bottomrule
\end{tabular}
}
\label{tab:recons_ablation}
\end{minipage}
\vspace{0.5ex}

\begin{minipage}[c]{0.7\textwidth}
\centering
\caption{DAVIS reconstruction comparison at $512$p.}
\resizebox{0.6\textwidth}{!}{%
\begin{tabular}{lcc}
\toprule
\textbf{Method} & \textbf{PSNR}$\uparrow$ & \textbf{LPIPS}$\downarrow$ \\
\midrule
Wan 2.2 VAE & 30.90 & 0.056\\
Wan 2.2 VAE-retrain & 30.14 & 0.087\\
48-ch Baseline & 29.84 & 0.091 \\
SSVAE (Ours) & 29.69 & 0.091\\
\bottomrule
\end{tabular}
}
\label{tab:davis_recons}
\end{minipage}
\end{table}

\textbf{Frequency Biasing Ablation.} We revisit existing frequency biasing techniques on 64-channel VAEs, including scale-equivariant methods~\cite{SELoss} and foundation model alignment approaches~\cite{VA-VAE}. We also validate the effectiveness of LCR on 48-channel VAEs. As shown in Tab.~\ref{tab:freq_bias_ablation} and Fig.~\ref{fig:LCR}, we observe: (i) Local correlation positively correlates with generation quality: higher local correlation leads to better generation rewards; (ii) Low-frequency bias may explain the generative benefits of foundation model alignment. However, existing frequency biasing methods cannot sufficiently bias the spatio-temporal frequency spectrum, as they lack explicit mechanisms for handling the temporal dimension. In contrast, LCR efficiently introduces spatio-temporal low-frequency components and allows flexible control of their energy via the threshold $\alpha$; (iii) The improvements from LCR are less pronounced in low-channel VAEs compared to high-channel VAEs. We attribute this to low-channel VAEs naturally introducing more low-frequency components. As shown in Tab.~\ref{tab:freq_bias_ablation}, the local correlation of the 48-channel baseline (0.67) is higher than that of the 64-channel baseline (0.49).
\label{sec:psd_exp}

\textbf{Reconstruction Comparison and Ablation}. We compare SSVAE and other VAEs at $17\times 512\times 512$ in Tab.~\ref{tab:recons_comparison}. While SSVAE's reconstruction fidelity is not the highest, it surpasses StepVideo and CogVideoX, demonstrating that its reconstruction capability is enough for training high-quality video diffusion models. Visualization results in Fig.~\ref{fig:recons_gen_vis}(b) confirm that its reconstruction remains \textbf{visually faithful, avoiding perceptible semantic distortions in details} such as the cup's text (Cafe) and clothing patterns. We ablate the impact of LCR and LMR in Tab.~\ref{tab:recons_ablation} under identical training regimes and observe: (i) \textbf{Smooth latents do not necessitate RGB detail loss.} The LCR, which induces latent smoothness, even slightly improves reconstruction. This aligns with existing 16-channel models like IV-VAE and Wan 2.1, which exhibit similarly steep PSDs (Supp.~Fig.~\ref{fig:psd_comp_for_lcr}) yet maintain high fidelity. (ii) LMR incurs a mild quality drop (0.23 PSNR vs. 48-ch baseline), which likely stems from a trade-off between decoding precision and robustness. This gap further narrows on DAVIS, where SSVAE is only 0.15 dB PSNR below the 48-ch baseline with matched LPIPS (Tab.~\ref{tab:davis_recons}). We further train downstream I2V models with channel-concat conditioning and find that SSVAE remains close to Wan 2.2 VAE in first-frame PSNR (34.95 vs. 35.14, gap 0.19), suggesting that diffusion noise reduces the practical PSNR gap between VAEs. (iii) The disparity with the official Wan 2.2 VAE likely arises from its greater parameter count and more extensive training (scale and data). For context, the comparable Seaweed~\cite{seaweed} underwent 1.3M training steps, whereas our model used 200k. Notably, the gap narrows significantly when comparing SSVAE to the re-trained Wan 2.2 baseline. Although extended training also improves SSVAE's reconstruction, we observed \textbf{negligible gains in generation quality and thus prioritized computational efficiency}.

\textit{In the supplementary material, experiments on public datasets confirm that our gains are independent of diffusion training data. Additionally, we provide LCR and LMR hyperparameter and computational overhead analyses, CPR/LMR residual diagnostics, relations of spectral regularization to spatial representation regularizations, extra visualizations, and the UnifiedReward-Thinking evaluation prompt.}

\section{Conclusion}
We analyze video VAE latent spaces and identify two critical spectral properties for diffusion training: few-mode bias in the channel eigenspectrum and low-frequency bias in the spatio-temporal spectrum. We introduce two lightweight regularizers to induce these properties, and experiments show that our approach accelerates convergence and improves video generation quality.

\bibliographystyle{splncs04}
\bibliography{main}

@String(CVPR  = {IEEE Conf. Comput. Vis. Pattern Recog.})

@String(CVPR  = {CVPR})

@article{wan,
  title={Wan: Open and advanced large-scale video generative models},
  author={Wan, Team and Wang, Ang and Ai, Baole and Wen, Bin and Mao, Chaojie and Xie, Chen-Wei and Chen, Di and Yu, Feiwu and Zhao, Haiming and Yang, Jianxiao and others},
  journal={arXiv preprint arXiv:2503.20314},
  year={2025}
}

@article{step-video,
  title={Step-video-t2v technical report: The practice, challenges, and future of video foundation model},
  author={Ma, Guoqing and Huang, Haoyang and Yan, Kun and Chen, Liangyu and Duan, Nan and Yin, Shengming and Wan, Changyi and Ming, Ranchen and Song, Xiaoniu and Chen, Xing and others},
  journal={arXiv preprint arXiv:2502.10248},
  year={2025}
}

@misc{hunyuan-video,
      title={HunyuanVideo: A Systematic Framework For Large Video Generative Models}, 
      author={Weijie Kong and Qi Tian and Zijian Zhang and Rox Min and Zuozhuo Dai and Jin Zhou and Jiangfeng Xiong and Xin Li and Bo Wu and Jianwei Zhang and Kathrina Wu and Qin Lin and Junkun Yuan and Yanxin Long and Aladdin Wang and Andong Wang and Changlin Li and Duojun Huang and Fang Yang and Hao Tan and Hongmei Wang and Jacob Song and Jiawang Bai and Jianbing Wu and Jinbao Xue and Joey Wang and Kai Wang and Mengyang Liu and Pengyu Li and Shuai Li and Weiyan Wang and Wenqing Yu and Xinchi Deng and Yang Li and Yi Chen and Yutao Cui and Yuanbo Peng and Zhentao Yu and Zhiyu He and Zhiyong Xu and Zixiang Zhou and Zunnan Xu and Yangyu Tao and Qinglin Lu and Songtao Liu and Dax Zhou and Hongfa Wang and Yong Yang and Di Wang and Yuhong Liu and Jie Jiang and Caesar Zhong},
      year={2025},
      eprint={2412.03603},
      archivePrefix={arXiv},
      primaryClass={cs.CV},
      url={https://arxiv.org/abs/2412.03603}, 
}

@inproceedings{cogvideox,
  title={CogVideoX: Text-to-Video Diffusion Models with An Expert Transformer},
  author={Yang, Zhuoyi and Teng, Jiayan and Zheng, Wendi and Ding, Ming and Huang, Shiyu and Xu, Jiazheng and Yang, Yuanming and Hong, Wenyi and Zhang, Xiaohan and Feng, Guanyu and others},
  booktitle={The Thirteenth International Conference on Learning Representations},
  year=2024
}

@inproceedings{SELoss,
  title={Improving the Diffusability of Autoencoders},
  author={Skorokhodov, Ivan and Girish, Sharath and Hu, Benran and Menapace, Willi and Li, Yanyu and Abdal, Rameen and Tulyakov, Sergey and Siarohin, Aliaksandr},
  booktitle={Forty-second International Conference on Machine Learning},
  year={2025}
}

@inproceedings{
  EQVAE,
  title={{EQ}-{VAE}: Equivariance Regularized Latent Space for Improved Generative Image Modeling},
  author={Theodoros Kouzelis and Ioannis Kakogeorgiou and Spyros Gidaris and Nikos Komodakis},
  booktitle={Forty-second International Conference on Machine Learning},
  year={2025},
  url={https://openreview.net/forum?id=UWhW5YYLo6}
}

@inproceedings{VA-VAE,
  title={Reconstruction vs. generation: Taming optimization dilemma in latent diffusion models},
  author={Yao, Jingfeng and Yang, Bin and Wang, Xinggang},
  booktitle={Proceedings of the Computer Vision and Pattern Recognition Conference},
  pages={15703--15712},
  year={2025}
}

@inproceedings{MAEtok,
  title={Masked autoencoders are effective tokenizers for diffusion models},
  author={Chen, Hao and Han, Yujin and Chen, Fangyi and Li, Xiang and Wang, Yidong and Wang, Jindong and Wang, Ze and Liu, Zicheng and Zou, Difan and Raj, Bhiksha},
  booktitle={Forty-second International Conference on Machine Learning},
  year={2025}
}

@article{REPA-E,
  title={REPA-E: Unlocking VAE for End-to-End Tuning with Latent Diffusion Transformers},
  author={Xingjian Leng and Jaskirat Singh and Yunzhong Hou and Zhenchang Xing and Saining Xie and Liang Zheng},
  year={2025},
  journal={arXiv preprint arXiv:2504.10483},
}

@article{xu2026making,
  title={Making Reconstruction FID Predictive of Diffusion Generation FID},
  author={Xu, Tongda and He, Mingwei and Abu-Hussein, Shady and Hernandez-Lobato, Jose Miguel and Zheng, Chunhang and Zhao, Kai and Zhou, Chao and Zhang, Ya-Qin and Wang, Yan},
  journal={arXiv preprint arXiv:2603.05630},
  year={2026}
}

@inproceedings{DC-AE,
  title={Deep Compression Autoencoder for Efficient High-Resolution Diffusion Models},
  author={Chen, Junyu and Cai, Han and Chen, Junsong and Xie, Enze and Yang, Shang and Tang, Haotian and Li, Muyang and Han, Song},
  booktitle={The Thirteenth International Conference on Learning Representations},
  year={2024}
}

@article{DC-AE-1.5,
  title={Dc-ae 1.5: Accelerating diffusion model convergence with structured latent space},
  author={Chen, Junyu and Zou, Dongyun and He, Wenkun and Chen, Junsong and Xie, Enze and Han, Song and Cai, Han},
  journal={arXiv preprint arXiv:2508.00413},
  year={2025}
}

@article{OD-VAE,
  title={Od-vae: An omni-dimensional video compressor for improving latent video diffusion model},
  author={Chen, Liuhan and Li, Zongjian and Lin, Bin and Zhu, Bin and Wang, Qian and Yuan, Shenghai and Zhou, Xing and Cheng, Xinhua and Yuan, Li},
  journal={arXiv preprint arXiv:2409.01199},
  year={2024}
}

@inproceedings{CV-VAE,
 author = {Zhao, Sijie and Zhang, Yong and Cun, Xiaodong and Yang, Shaoshu and Niu, Muyao and Li, Xiaoyu and Hu, Wenbo and Shan, Ying},
 booktitle = {Advances in Neural Information Processing Systems},
 editor = {A. Globerson and L. Mackey and D. Belgrave and A. Fan and U. Paquet and J. Tomczak and C. Zhang},
 pages = {12847--12871},
 publisher = {Curran Associates, Inc.},
 title = {CV-VAE: A Compatible Video VAE for Latent Generative Video Models},
 url = {https://proceedings.neurips.cc/paper_files/paper/2024/file/1787533e171dcc8549cc2eb5a4840eec-Paper-Conference.pdf},
 volume = {37},
 year = {2024}
}

@inproceedings{IV-VAE,
    author = {Pingyu Wu and Kai Zhu and Yu Liu and Liming Zhao and Wei Zhai and Yang Cao and Zheng-Jun Zha},
    title = {Improved Video VAE for Latent Video Diffusion Model},
    booktitle = {Proceedings of the Computer Vision and Pattern Recognition Conference},
    year = {2024}
}

@InProceedings{WF-VAE,
    author    = {Li, Zongjian and Lin, Bin and Ye, Yang and Chen, Liuhan and Cheng, Xinhua and Yuan, Shenghai and Yuan, Li},
    title     = {WF-VAE: Enhancing Video VAE by Wavelet-Driven Energy Flow for Latent Video Diffusion Model},
    booktitle = {Proceedings of the IEEE/CVF Conference on Computer Vision and Pattern Recognition (CVPR)},
    month     = {June},
    year      = {2025},
    pages     = {17778-17788}
}

@article{dinov2,
title={{DINO}v2: Learning Robust Visual Features without Supervision},
author={Maxime Oquab and Timoth{\'e}e Darcet and Th{\'e}o Moutakanni and Huy V. Vo and Marc Szafraniec and Vasil Khalidov and Pierre Fernandez and Daniel HAZIZA and Francisco Massa and Alaaeldin El-Nouby and Mido Assran and Nicolas Ballas and Wojciech Galuba and Russell Howes and Po-Yao Huang and Shang-Wen Li and Ishan Misra and Michael Rabbat and Vasu Sharma and Gabriel Synnaeve and Hu Xu and Herve Jegou and Julien Mairal and Patrick Labatut and Armand Joulin and Piotr Bojanowski},
journal={Transactions on Machine Learning Research},
issn={2835-8856},
year={2024},
url={https://openreview.net/forum?id=a68SUt6zFt}
}

@article{saxe2013exact,
  title={Exact solutions to the nonlinear dynamics of learning in deep linear neural networks},
  author={Saxe, Andrew M and McClelland, James L and Ganguli, Surya},
  journal={arXiv preprint arXiv:1312.6120},
  year={2013}
}

@article{wiener-khincin,
  title={The wiener-khinchin theorem},
  author={Kschischang, Frank R},
  journal={The Edward S. Rogers Sr. Department of Electrical and Computer Engineering University of Toronto},
  year={2017}
}

@inproceedings{SD3,
  title={Scaling rectified flow transformers for high-resolution image synthesis},
  author={Esser, Patrick and Kulal, Sumith and Blattmann, Andreas and Entezari, Rahim and M{\"u}ller, Jonas and Saini, Harry and Levi, Yam and Lorenz, Dominik and Sauer, Axel and Boesel, Frederic and others},
  booktitle={Forty-first international conference on machine learning},
  year={2024}
}

@article{unifiedreward,
  title={Unified multimodal chain-of-thought reward model through reinforcement fine-tuning},
  author={Wang, Yibin and Li, Zhimin and Zang, Yuhang and Wang, Chunyu and Lu, Qinglin and Jin, Cheng and Wang, Jiaqi},
  journal={arXiv preprint arXiv:2505.03318},
  year={2025}
}

@misc{laion,
  title         = {LAION Dataset},
  author        = {LAION Project},
  year          = {2022},
  howpublished  = {\url{https://laion.ai/projects/}},
}

@misc{coyo-700m,
  title         = {COYO-700M: Image-Text Pair Dataset},
  author        = {Byeon, Minwoo and Park, Beomhee and Kim, Haecheon and Lee, Sungjun and Baek, Woonhyuk and Kim, Saehoon},
  year          = {2022},
  howpublished  = {\url{https://github.com/kakaobrain/coyo-dataset}},
}

@InProceedings{webvid,
  author       = "Max Bain and Arsha Nagrani and G{\"u}l Varol and Andrew Zisserman",
  title        = "Frozen in Time: A Joint Video and Image Encoder for End-to-End Retrieval",
  booktitle    = "IEEE International Conference on Computer Vision",
  year         = "2021",
}

@inproceedings{chen2024panda70m,
  title     = {Panda-70M: Captioning 70M Videos with Multiple Cross-Modality Teachers},
  author    = {Chen, Tsai-Shien and Siarohin, Aliaksandr and Menapace, Willi and Deyneka, Ekaterina and Chao, Hsiang-wei and Jeon, Byung Eun and Fang, Yuwei and Lee, Hsin-Ying and Ren, Jian and Yang, Ming-Hsuan and Tulyakov, Sergey},
  booktitle = {Proceedings of the IEEE/CVF Conference on Computer Vision and Pattern Recognition},
  year      = {2024}
}

@article{moviegenbench,
  title={Movie Gen: A Cast of Media Foundation Models},
  author={Polyak, Adam and Zohar, Amit and Brown, Andrew and Tjandra, Andros and Sinha, Animesh and Lee, Ann and Vyas, Apoorv and Shi, Bowen and Ma, Chih-Yao and Chuang, Ching-Yao and others},
  journal={arXiv preprint arXiv:2410.13720},
  year={2024}
}

@article{vbench,
    title = {Evaluation Agent: Efficient and Promptable Evaluation Framework for Visual Generative Models},
    author = {Zhang, Fan and Tian, Shulin and Huang, Ziqi and Qiao, Yu and Liu, Ziwei},
    journal={arXiv preprint arXiv:2412.09645},
    year = {2024}
}

@article{videoalign,
  title={Improving video generation with human feedback},
  author={Liu, Jie and Liu, Gongye and Liang, Jiajun and Yuan, Ziyang and Liu, Xiaokun and Zheng, Mingwu and Wu, Xiele and Wang, Qiulin and Qin, Wenyu and Xia, Menghan and others},
  journal={arXiv preprint arXiv:2501.13918},
  year={2025}
}

@inproceedings{ge2024content,
  title={On the content bias in fr{\'e}chet video distance},
  author={Ge, Songwei and Mahapatra, Aniruddha and Parmar, Gaurav and Zhu, Jun-Yan and Huang, Jia-Bin},
  booktitle={Proceedings of the IEEE/CVF conference on computer vision and pattern recognition},
  pages={7277--7288},
  year={2024}
}

@article{seaweed,
  title={Seaweed-7b: Cost-effective training of video generation foundation model},
  author={{Seaweed Team} and Yang, Ceyuan and Lin, Zhijie and Zhao, Yang and Lin, Shanchuan and Ma, Zhibei and Guo, Haoyuan and Chen, Hao and Qi, Lu and Wang, Sen and others},
  journal={arXiv preprint arXiv:2504.08685},
  year={2025}
}

@article{bonnaire2025diffusion,
  title={Why Diffusion Models Don't Memorize: The Role of Implicit Dynamical Regularization in Training},
  author={Bonnaire, Tony and Urfin, Rapha{\"e}l and Biroli, Giulio and M{\'e}zard, Marc},
  journal={arXiv preprint arXiv:2505.17638},
  year={2025}
}

\clearpage
\appendix
\section*{Supplementary Material}
\setcounter{section}{0}
\setcounter{figure}{0}
\setcounter{table}{0}
\setcounter{equation}{0}
\renewcommand{\thesection}{A.\arabic{section}}
\renewcommand{\thefigure}{A.\arabic{figure}}
\renewcommand{\thetable}{A.\arabic{table}}
\renewcommand{\theequation}{A.\arabic{equation}}
\renewcommand{\theHsection}{appendix.\arabic{section}}
\renewcommand{\theHfigure}{appendix.\arabic{figure}}
\renewcommand{\theHtable}{appendix.\arabic{table}}
\renewcommand{\theHequation}{appendix.\arabic{equation}}

\begin{itemize}
    \item Sec.~\ref{sec:proof}: Proof of Theorem 1.
    \item Sec.~\ref{sec:autocorrelation}: Relationship among Small-lag Autocorrelation, Low-frequency Energy, and LCR.
    \item Sec.~\ref{sec:recon_vis}: Compatibility of Smooth Latents and High Reconstruction Quality.
    \item Sec.~\ref{sec:public_data_validation}: Public-Data Validation.
    \item Sec.~\ref{sec:dino_affinity}: DINO Structural Affinity Diagnostic.
    \item Sec.~\ref{sec:lcr_hyperparameters}: Impact of LCR Hyperparameters on PSD.
    \item Sec.~\ref{sec:lmr_hyperparameters}: Impact of LMR Mask Hyperparameters.
    \item Sec.~\ref{sec:cpr_lmr_residual}: CPR/LMR Residual CEV Diagnostic.
    \item Sec.~\ref{sec:Principal}: Visualization of Latent Principal Components.
    \item Sec.~\ref{sec:visualization}: More Generation Results.
    \item Sec.~\ref{sec:Computation}: Computational Overhead of LCR and LMR.
    \item Sec.~\ref{sec:Prompts}: Prompts for UnifiedReward-Thinking.
\end{itemize}

\section{Proof of Theorem 1}
\label{sec:proof}
We briefly recall the notation and Theorem 1. For a $C$-channel VAE, let $\rvu^0 \in \mathbb{R}^{1\times C}$ denote a latent vector sampled from the standardized latent $\tilde{\rvz}$, where the superscript $0$ denotes the clean latent at diffusion timestep $0$. The channel-wise autocorrelation matrix $\Sigma_{\rvu \rvu} \in \mathbb{R}^{C \times C}$ is defined as $\Sigma_{\rvu \rvu} = \mathbb{E}[(\rvu^0)^\top \rvu^0]$. Let $\Sigma_{\rvv \rvu}(t) \in \mathbb{R}^{C \times C}$ denote the channel-wise output–input cross-correlation matrix of the diffusion backbone at timestep $t$, and let $\lambda_l$ denote the $l$-th largest eigenvalue of $\Sigma_{\rvu\rvu}$. Then we have the following theorem under flow-matching with velocity prediction.
\begin{figure}[t]
    \centering
    \includegraphics[width=0.6\linewidth]{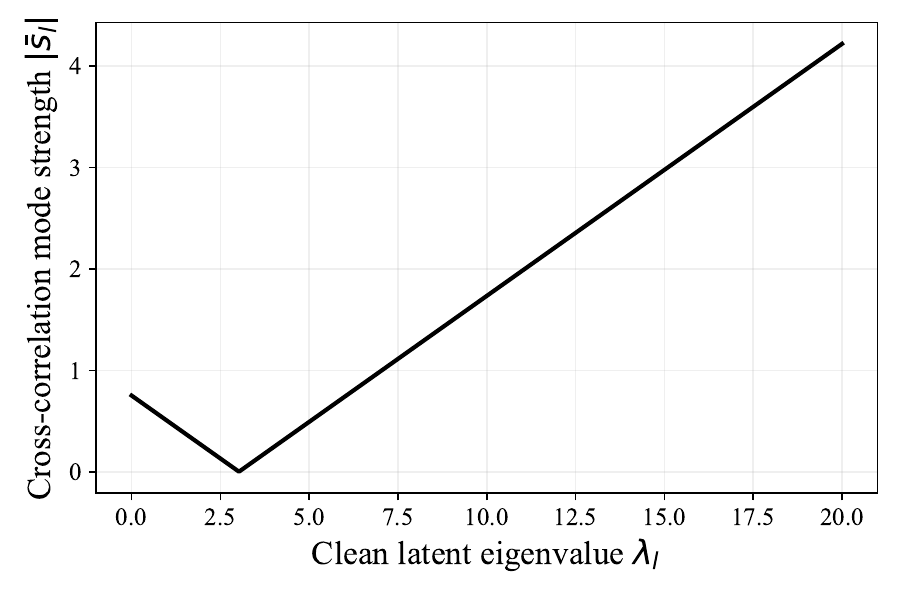}
    \caption{Relationship between the eigenvalue $\lambda_l$ of the clean latent autocorrelation matrix and its corresponding expected absolute mode strength $|\bar{s}_l|$ in the diffusion output-input cross-correlation matrix. Timesteps are sampled from logit-normal distribution with logit mean 0.0 and logit std 1.0.}
    \label{fig:lambda_to_si}
\end{figure}
\begin{theorem}
$\Sigma_{\rvv\rvu}(t)$ has the same eigenvectors as $\Sigma_{\rvu\rvu}$, and its eigenvalue on the $l$-th eigenvector of $\Sigma_{\rvu\rvu}$ is given by:
\begin{equation*}
s_l(t) = t - (1-t)\lambda_l.
\end{equation*}
\end{theorem}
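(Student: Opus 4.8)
The plan is a short second–moment computation once the training setup is fixed. First I would pin down the SD3-style flow-matching / velocity-prediction convention: a clean latent vector $\rvu^0$ is drawn from the per-channel standardized latent, an independent noise vector $\rvu^1 \sim \mathcal{N}(\vzero,\rmI)$ is drawn, the network input at timestep $t$ is the linear interpolant $\rvu^t = (1-t)\,\rvu^0 + t\,\rvu^1$, and the regression target (the velocity) is $\rvv = \tfrac{d}{dt}\rvu^t = \rvu^1 - \rvu^0$. With ``output'' meaning this target velocity, the output–input cross-correlation is the matrix of second moments $\Sigma_{\rvv\rvu}(t) = \E[\rvv^\top \rvu^t]$, the expectation running over $\rvu^0$, $\rvu^1$, and the spatio-temporal positions (exactly as $\Sigma_{\rvu\rvu}$ is defined in Sec.~\ref{sec:FMB_impact}).

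Next I would expand $\Sigma_{\rvv\rvu}(t) = \E\big[(\rvu^1 - \rvu^0)^\top\big((1-t)\,\rvu^0 + t\,\rvu^1\big)\big]$ by bilinearity into four terms. The two cross terms $\E[(\rvu^1)^\top\rvu^0]$ and $\E[(\rvu^0)^\top\rvu^1]$ vanish entrywise because $\rvu^1$ is independent of $\rvu^0$ with $\E[\rvu^1]=\vzero$; the two surviving terms are $t\,\E[(\rvu^1)^\top\rvu^1] = t\,\rmI$ (the covariance of a standard Gaussian) and $-(1-t)\,\E[(\rvu^0)^\top\rvu^0] = -(1-t)\,\Sigma_{\rvu\rvu}$. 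Hence $\Sigma_{\rvv\rvu}(t) = t\,\rmI - (1-t)\,\Sigma_{\rvu\rvu}$, which already is a sanity check: at $t=1$ it is $\rmI$ and at $t=0$ it is $-\Sigma_{\rvu\rvu}$, both of which match the direct computation.

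Then I would read off the spectral structure. Since $\Sigma_{\rvu\rvu}$ is real symmetric positive semidefinite, diagonalize $\Sigma_{\rvu\rvu} = \rmQ\,\mathrm{diag}(\lambda_1,\dots,\lambda_C)\,\rmQ^\top$ with orthonormal $\rmQ$; adding $t\rmI$ and negating leaves $\rmQ$ fixed and only shifts/flips eigenvalues, so $\Sigma_{\rvv\rvu}(t) = \rmQ\,\mathrm{diag}\big(t-(1-t)\lambda_1,\dots,t-(1-t)\lambda_C\big)\,\rmQ^\top$. This is precisely the statement: $\Sigma_{\rvv\rvu}(t)$ and $\Sigma_{\rvu\rvu}$ share the eigenvectors (columns of $\rmQ$), and the eigenvalue of $\Sigma_{\rvv\rvu}(t)$ along the $l$-th one is $s_l(t) = t - (1-t)\lambda_l$.

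I do not anticipate a genuine analytic obstacle; the content is a couple of lines of algebra once the framework is pinned down. The points that need care are modeling/bookkeeping rather than technical: (i) being explicit that $\Sigma_{\rvv\rvu}(t)$ is the \emph{target} statistic the learner must realize — the quantity that then feeds the deep-linear-network learning dynamics of~\cite{saxe2013exact} — not the network's current output statistic; (ii) stating the standing assumptions that kill the cross terms and give $\E[(\rvu^1)^\top\rvu^1]=\rmI$, namely independence of $\rvu^1$ and $\rvu^0$, marginal $\mathcal{N}(\vzero,\rmI)$ diffusion noise, and per-channel standardization of $\rvu^0$ (so the autocorrelation coincides with the centered covariance); and (iii) remarking that the transposed convention $\E[(\rvu^t)^\top\rvv]$ yields the identical formula since $\Sigma_{\rvu\rvu}$ is symmetric. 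If helpful I would also note that the input autocorrelation $\E[(\rvu^t)^\top\rvu^t] = (1-t)^2\Sigma_{\rvu\rvu} + t^2\rmI$ shares these same eigenvectors, which is what lets the subsequent per-mode convergence discussion decouple cleanly.
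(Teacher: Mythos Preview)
Your proposal is correct and follows essentially the same route as the paper: expand $\Sigma_{\rvv\rvu}(t)=\E[\rvv^\top\rvu^t]$ under the flow-matching velocity convention, use independence and zero-mean noise to drop the cross terms, obtain $\Sigma_{\rvv\rvu}(t)=t\,\rmI-(1-t)\,\Sigma_{\rvu\rvu}$, and read off the shared eigenvectors and shifted eigenvalues. Your additional remarks (sanity checks at $t=0,1$, the transposed convention, and the input autocorrelation sharing the same eigenbasis) are sound and go slightly beyond what the paper writes out.
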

\begin{proof}
Under the flow-matching setting with velocity prediction, we have
\begin{align*}
\rvu^t &= (1-t)\,\rvu^0 + t\,\boldsymbol{\epsilon},\; 
\rvv = \boldsymbol{\epsilon} - \rvu^0,\; 
\boldsymbol{\epsilon} \sim \mathcal{N}(\boldsymbol{0}, \mI),
\end{align*}
where $\rvu^t$ is the diffusion input at timestep $t$, $\rvv$ is the velocity to be predicted, and $\boldsymbol{\epsilon}$ is standard Gaussian noise. The cross-correlation matrix is
\begin{align}
\Sigma_{\rvv \rvu}(t) 
&= \mathbb{E}\big[\rvv^\top \rvu^t\big] 
= \mathbb{E}\big[(\boldsymbol{\epsilon}-\rvu^0)^\top\big((1-t)\rvu^0 + t\boldsymbol{\epsilon}\big)\big] \notag \\
&= (1-t)\,\mathbb{E}\!\big[\boldsymbol{\epsilon}^\top \rvu^0\big] - t\,\mathbb{E}\!\big[(\rvu^0)^\top \boldsymbol{\epsilon}\big] \notag \\
& \quad + t\,\mathbb{E}\!\big[\boldsymbol{\epsilon}^\top \boldsymbol{\epsilon}\big]
- (1-t)\,\mathbb{E}\!\big[(\rvu^0)^\top \rvu^0\big] \notag \\
&= t\,\mI - (1-t)\,\Sigma_{\rvu \rvu}, \label{eq:covariance_relation}
\end{align}
where we used the independence of $\rvu^0$ and $\boldsymbol{\epsilon}$ together with the zero mean of $\boldsymbol{\epsilon}$ to drop the cross terms, and $\mathbb{E}[\boldsymbol{\epsilon}^\top \boldsymbol{\epsilon}] = \mI$. Since $\Sigma_{\rvv \rvu}(t)$ is an affine function of $\mI$ and $\Sigma_{\rvu \rvu}$, it is diagonalized by the same orthogonal matrix as $\Sigma_{\rvu \rvu}$. Hence they share the same eigenvectors, and the corresponding eigenvalues satisfy
\begin{equation*}
s_l(t) = t - (1-t)\lambda_l.
\end{equation*}
\end{proof}

We visualize in Fig.~\ref{fig:lambda_to_si} the relationship between the eigenvalues $\lambda_l$ of the clean latent autocorrelation matrix and their corresponding expected absolute cross-modal strengths $\bar{s}_l$, with timesteps sampled from a commonly used logit-normal distribution (logit mean $0.0$, logit std $1.0$). As shown in the figure, the relationship between $\lambda_l$ and $\bar{s}_l$ is a piecewise linear function, where $\bar{s}_l$ first decreases and then increases as $\lambda_l$ grows. This indicates that very small eigenvalues in the clean latent can correspond to relatively large absolute mode strengths in the cross-correlation matrix.

The theorem is stated for velocity-prediction flow matching, which is the training objective used by many recent T2V systems. Its main implication is not tied to this exact parameterization: for any linear noising process with an affine prediction target, the target-input cross-correlation remains a linear combination of the identity matrix and the clean-latent covariance. Therefore, the clean-latent eigenspectrum still determines the modal structure seen by the diffusion backbone, up to changes in the coefficients attached to each mode. In contrast, under $\boldsymbol{\epsilon}$-prediction, the target-input cross-correlation reduces to an isotropic signal, so different latent covariance modes no longer receive different target-input strengths from this diagnostic.

\section{Relationship among Small-lag Autocorrelation, Low-frequency Energy, and LCR}
\label{sec:autocorrelation}
We begin by analyzing the relationship between small-lag autocorrelation and low-frequency energy, then illustrate how LCR approximates small-lag autocorrelation.

Let $\tilde{\rvz}\in\R^{T\times H\times W\times C}$ denote the per-channel standardized VAE latent with spatio-temporal indices $(i, j, k)$. The channel-summed autocorrelation function is defined as
\begin{equation}
R[\delta_i,\delta_j,\delta_k] \;=\; 
\,\E_{i,j,k}\big[\tilde{\rvz}^\top_{i,j,k}\,\tilde{\rvz}_{i+\delta_i,\,j+\delta_j,\,k+\delta_k}\big].
\label{eq:autocorrelation}
\end{equation}
where the inner expectation is over valid indices such that both $(i,j,k)$ and $(i+\delta_i,j+\delta_j,k+\delta_k)$ lie in the latent domain.

To make the link to spectral energy transparent, we consider the one-dimensional case. Let the dimension length be $N$ and write the autocorrelation as $R[\delta]$ for $\delta\in\{0,\ldots,N-1\}$. By the discrete Wiener–Khinchin theorem, the power spectral density (PSD) $S[m]$ is the discrete Fourier transform (DFT) of $R[\delta]$:
\begin{equation}
S[m] = \sum_{\delta=0}^{N-1} R[\delta] \mathrm{e}^{-\,\mathrm{i}\,2\pi\frac{m\delta}{N}},\; m\in \{0, 1, ..., N-1 \}.
\end{equation}

\begin{figure}[t]
\centering
\includegraphics[width=0.8\linewidth]{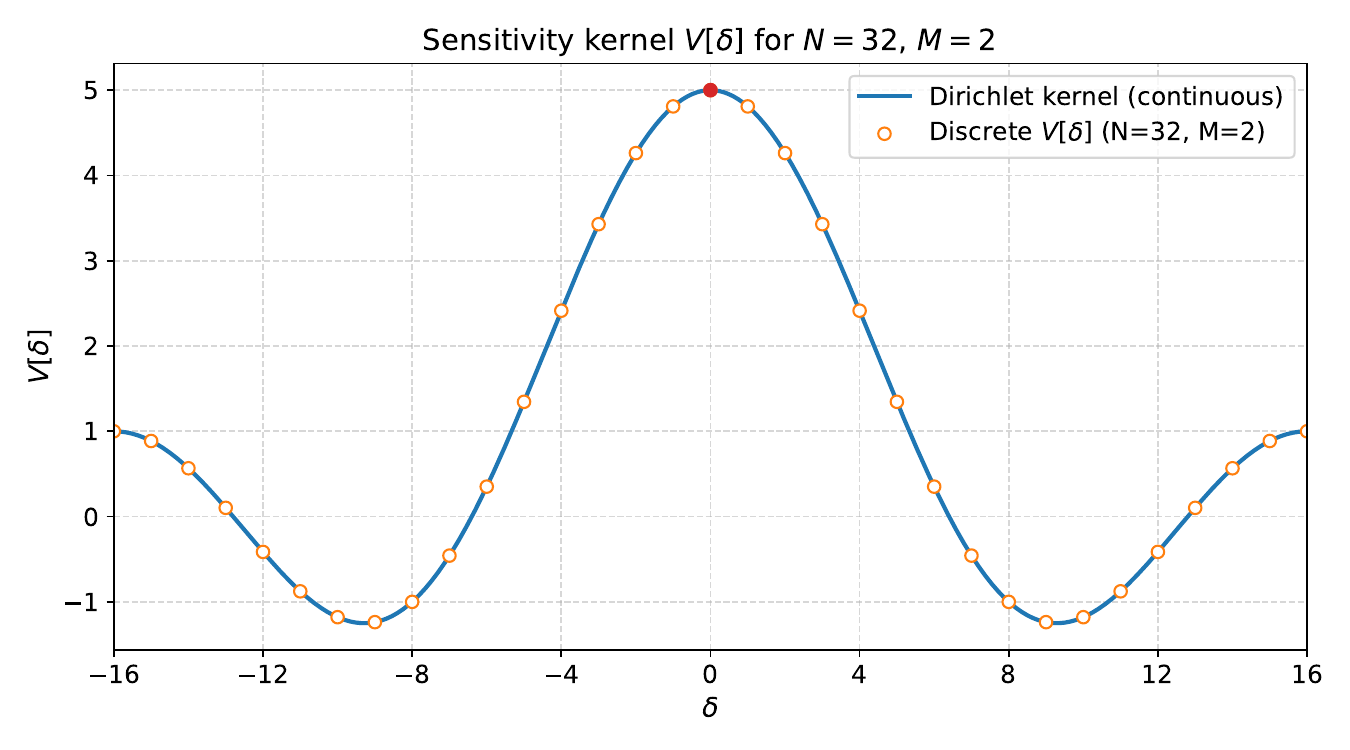}
\caption{The sensitivity kernel $V[\delta]$ for $N=32$, $\mathbb{M}^{\mathrm{low}}_{max}=2$ has a sharp peak at small lags and exhibits oscillatory decay with increasing $|\delta|$, indicating its low-pass property.}
\label{fig:sensitivity_kernel}
\end{figure}

We then define the low-frequency energy by introducing a low-frequency index set $\mathbb{M}^{\mathrm{low}}$. For real-valued signals, it is convenient to take a symmetric low-frequency band around DC, e.g., $\mathbb{M}^{\mathrm{low}}=\{0,1,\ldots, \mathbb{M}^{\mathrm{low}}_{max} \}\cup\{N-\mathbb{M}^{\mathrm{low}}_{max},\ldots,N-1\}$ for some $\mathbb{M}^{\mathrm{low}}_{max}\ll N$. Then
\begin{equation}
E_{\mathrm{low}} = \sum_{m\in \mathbb{M}^{\mathrm{low}}} S[m] = \sum_{m\in \mathbb{M}^{low}} \sum_{\delta = 0}^{N-1} R[\delta]\;\mathrm{e}^{-\,\mathrm{i}\,2\pi\frac{m\delta}{N}}.
\end{equation}
Exchanging the sums, we obtain
\begin{equation}
E_{\mathrm{low}} = \sum_{\delta=0}^{N-1} R[\delta]\cdot V[\delta],
\end{equation}
where the sensitivity kernel $V[\cdot]$ is
\begin{align}
V[\delta]
&= \sum_{m\in \mathbb{M}^{\mathrm{low}}} \mathrm{e}^{-\,\mathrm{i}\,2\pi\frac{m\delta}{N}}
= 1 + 2\sum_{m=1}^{\mathbb{M}^{\mathrm{low}}_{max}} \cos\!\Big(2\pi\frac{m\delta}{N}\Big) \notag\\
&= \frac{\sin\!\big((2\mathbb{M}^{\mathrm{low}}_{max}+1)\,\pi\,\delta/N\big)}{\sin\!\big(\pi\,\delta/N\big)}\, \notag \\
& \text{with }V[0]=2\mathbb{M}^{\mathrm{low}}_{max}+1.
\end{align}
Thus, the low-frequency energy $E_{\mathrm{low}}$ is a weighted sum (inner product) of the autocorrelation sequence $R[\delta]$ with a Dirichlet-type kernel $V[\delta]$. 
The kernel $V[\delta]$ is sharply peaked at $\delta=0$ and decays away from zero with oscillatory side lobes, as depicted in Fig.~\ref{fig:sensitivity_kernel}. Consequently, when $R[\delta]$ is absolutely summable and its magnitude decreases with $|\delta|$, which is a mild property satisfied by typical natural spatio-temporal data, then the contribution of small lags to $E_{\mathrm{low}}$ dominates. This formalizes the intuition that low-frequency energy is primarily controlled by the values of the autocorrelation at small lags.

We now connect this to our Local Correlation Regularization. Recall the local correlation within a patch $p$:
\begin{align}
\tilde R(p) \;=\; & \E_{\substack{(i,j,k),(\hat{i}, \hat{j}, \hat{k})\in p,\\ (i,j,k)\neq(\hat{i}, \hat{j}, \hat{k})}} \big[\, \tilde{\rvz}^\top_{i,j,k}\, \tilde{\rvz}_{\hat{i},\hat{j},\hat{k}} \,\big].
\label{eq:supp_local_corres}
\end{align}
The key difference between Eq.~\ref{eq:autocorrelation} and Eq.~\ref{eq:supp_local_corres} is that Eq.~\ref{eq:supp_local_corres} restricts the pairs to lie within the same spatio-temporal patch and discards inter-patch pairs. Since pairs within a small patch necessarily correspond to small spatio-temporal offsets, the terms aggregated by $\tilde R(p)$ form a subset of the small-lag terms of autocorrelation. Thus, averaging $\tilde R(p)$ over patches yields a biased but consistent proxy for the small-lag autocorrelation, while avoiding the computational cost of enumerating all pairs across the full volume.

Maximizing local correlations therefore encourages increases in small-lag autocorrelation, which raises low-frequency energy in the latents. In practice, we find that replacing the per-pair dot product with cosine similarity does not change the qualitative optimization objective with respect to autocorrelation, while it improves numerical stability and prevents high-variance patches from dominating. We therefore adopt this modification in LCR for a more balanced and robust optimization.

\section{Compatibility of Smooth Latents and High Reconstruction Quality}
\label{sec:recon_vis}
\begin{figure}[t]
    \centering
    \includegraphics[width=0.5\linewidth]{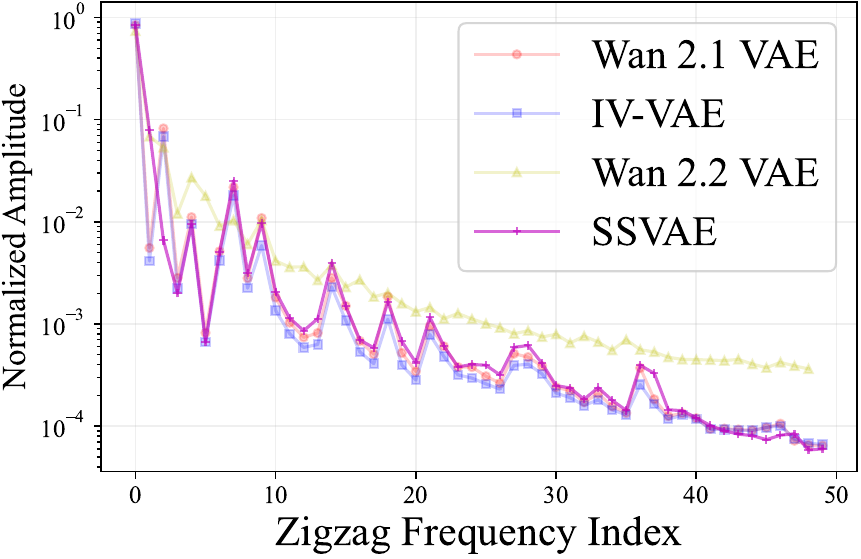}
    \caption{PSD comparison between existing Video VAEs and SSVAE.}
    \label{fig:psd_comp_for_lcr}
\end{figure}

\begin{figure}[t]
    \centering
    \includegraphics[width=\linewidth]{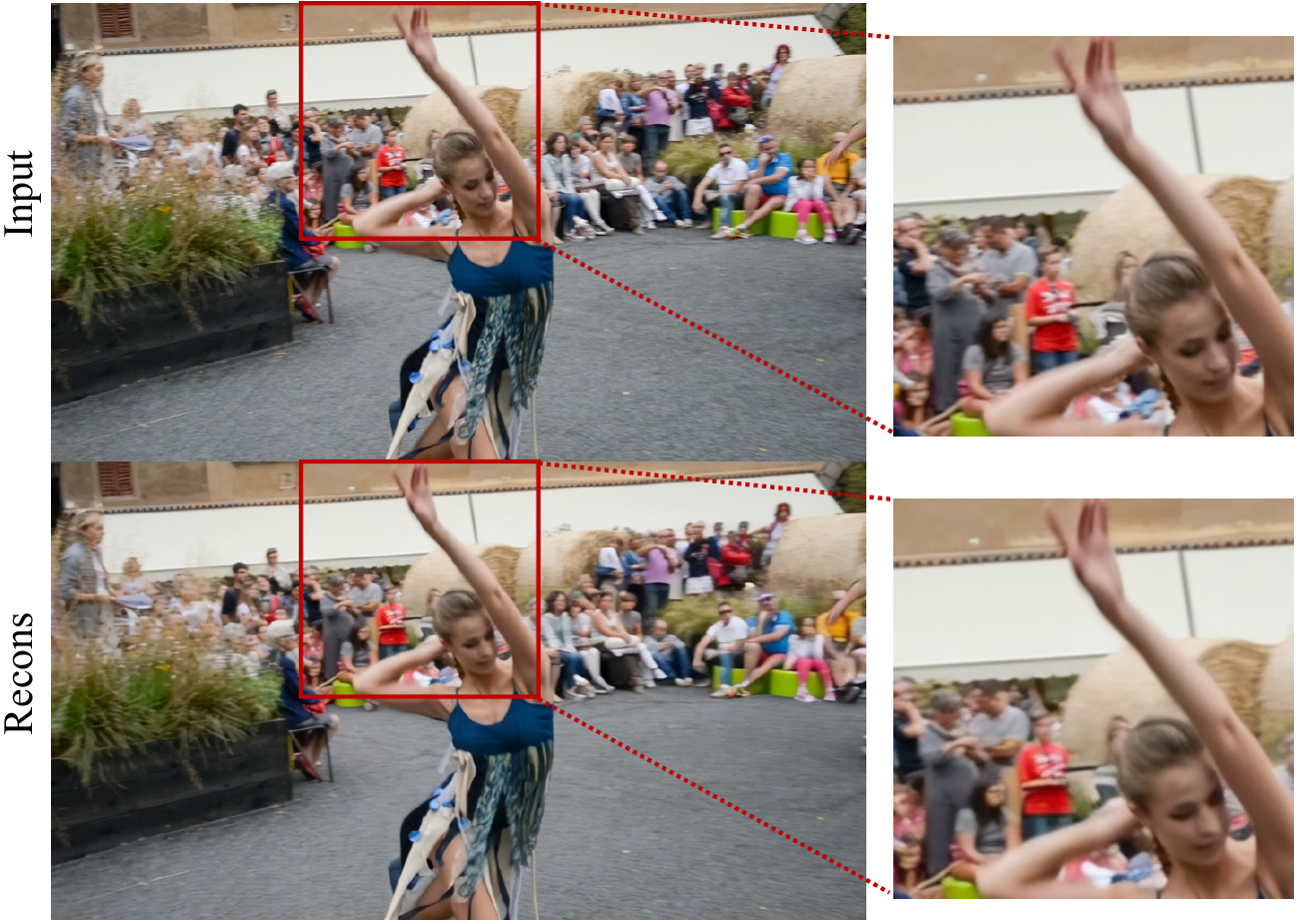}
    \caption{DAVIS reconstruction visualization.}
    \label{fig:davis_recons}
\end{figure}

We analyze the PSD curves of SSVAE and existing video VAEs in Fig.~\ref{fig:psd_comp_for_lcr}. Benefiting from the 16-channel design, IV-VAE and Wan 2.1 VAE exhibit PSD steepness similar to SSVAE, yet both maintain high reconstruction fidelity (IV-VAE: 40.29 dB, Wan 2.1: 38.57 dB). This indicates that smooth latents do not necessarily compromise RGB reconstruction quality. We provide the DAVIS quantitative results and summarize the I2V first-frame diagnostic in the main paper, and visualize DAVIS reconstructions in Fig.~\ref{fig:davis_recons}. The visual examples show no obvious perceptual degradation, supporting the conclusion that the mild reconstruction difference does not impose a clear bottleneck on downstream generation.

\section{Public-Data Validation}
\label{sec:public_data_validation}
To rule out internal training data bias, we train diffusion models with SSVAE and Wan 2.2 VAE on public datasets COYO, LAION, and Panda70M for 100k steps at $512 \times 512$ resolution. As shown in Tab.~\ref{tab:public_training}, SSVAE consistently outperforms Wan 2.2 VAE on VBench, matching the trend in the main paper.

\begin{table}[t]
    \centering
    \caption{Generation quality when diffusion models are trained on public datasets.}
    \begin{tabular}{lccc}
        \toprule
        Method & UR & VAR & FVD \\
        \midrule
        Wan 2.2 & 38.7 & 27.7 & 1209 \\
        SSVAE   & \textbf{40.7} & \textbf{29.1} & \textbf{1092} \\
        \bottomrule
    \end{tabular}
    \label{tab:public_training}
\end{table}

\section{DINO Structural Affinity Diagnostic}
\label{sec:dino_affinity}
To clarify how our spectral regularizers relate to spatial-structure-oriented representation regularization, we report a DINO structural affinity diagnostic in Tab.~\ref{tab:dino_affinity}. The metric is the Spearman correlation between the pairwise token-similarity matrix of VAE latents and that of aligned DINOv2 patches.

\begin{table}[t]
    \centering
    \caption{DINO structural affinity diagnostic.}
    \begin{tabular}{lc}
        \toprule
        Method & Latent-DINO corr.\\
        \midrule
        48-ch Baseline & 0.378 \\
        +LCR & 0.408\\
        +LCR + LMR & 0.410\\
        \bottomrule
    \end{tabular}
    \label{tab:dino_affinity}
\end{table}

The diagnostic suggests a partial overlap between LCR and spatial-structure-oriented regularization. A moderate LCR improves latent-DINO correlation, which is consistent with the intuition that encouraging neighboring latent tokens to be locally coherent also improves spatial structure. This is not a competing explanation to the spectral view: the same neighboring-token similarity controls small-lag autocorrelation and therefore low-frequency energy.

In contrast, LMR only marginally changes the latent-DINO correlation beyond LCR. We therefore do not attribute LMR's gain mainly to improved spatial-structure alignment. Its role is better understood as channel eigenspectrum shaping: LMR promotes a controlled few-mode bias and changes the modal structure seen by the diffusion backbone, complementing LCR along a different axis.

\begin{figure}[t]
    \centering
    \begin{minipage}[c]{0.53\linewidth}
        \centering
        \includegraphics[width=\linewidth]{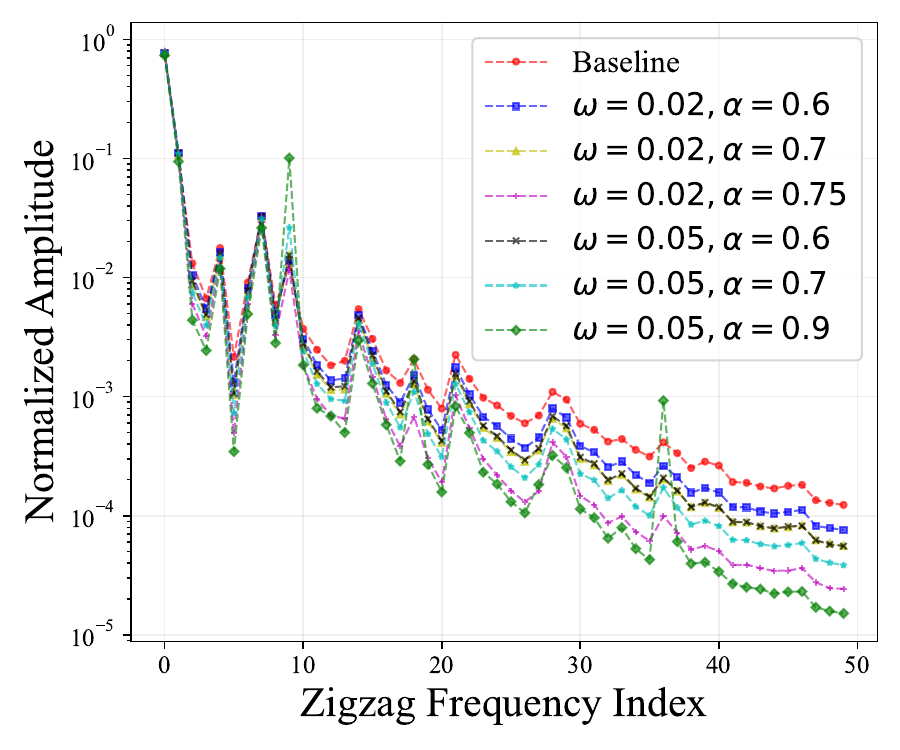}
        \captionof{figure}{Effect of varying the LCR loss weight $\omega$ and threshold $\alpha$ on the power spectral density curve.}
        \label{fig:lcr_thresholds}
    \end{minipage}
    \hfill
    \begin{minipage}[c]{0.46\linewidth}
        \centering
        \captionof{table}{Effect of varying the LCR loss weight $\omega$ and threshold $\alpha$ on the generation quality.}
        \label{tab:lcr_hyperparam}
            \begin{tabular}{ccccc}
                \toprule
                \bfseries $\omega$ & \bfseries $\alpha$ & \bfseries UR & \bfseries VAR & \bfseries FVD \\
                \midrule
                0.00 & - & 38.1 & 30.6 & 487 \\
                0.05 & 0.6 & 38.5 & 30.6 & 453 \\
                0.02 & 0.75 & \textbf{39.0} & 30.6 & 472 \\
                0.02 & 0.90 & 38.0 & \textbf{30.7} & \textbf{438} \\
                \bottomrule
            \end{tabular}
    \end{minipage}
\end{figure}

\section{Impact of LCR Hyperparameters on PSD}
\label{sec:lcr_hyperparameters}
We study the impact of LCR hyperparameters on the power spectral density (PSD) curve and generation quality. The VAEs are trained for 100k steps at a resolution of $256\times 256$, consistent with the setting used in the Ablation Studies. As shown in Fig.~\ref{fig:lcr_thresholds} and Tab.~\ref{tab:lcr_hyperparam}, we observe the following:  
(i) The steepness of the PSD curve can be flexibly controlled by varying the hyperparameters of LCR. Increasing either the loss weight $\omega$ or the local correlation threshold $\alpha$ leads to a gradual increase in PSD steepness. For precise control of the local correlation using LCR, we recommend adjusting the threshold $\alpha$ while keeping the loss weight $\omega$ sufficiently large and fixed, e.g., at 0.02. 
(ii) Setting $\omega$ and $\alpha$ too high can result in mid-frequency spikes in the PSD curve, see `$\omega=0.05, \alpha=0.9$' in Fig.~\ref{fig:lcr_thresholds} and Tab.~\ref{tab:lcr_hyperparam}. This occurs because small-lag autocorrelations also contribute to other frequency components with low weight, and excessively high small-lag autocorrelation can anomalously amplify the energy of these undesired components. In practice, we find that an over-biased PSD curve does not provide further improvements in diffusability and may even be detrimental. Accordingly, we ultimately set the hyperparameters to $\omega=0.02$ and $\alpha=0.75$, which achieve the highest UR.

\begin{table}[t]
    \centering
    \caption{Effect of mask ratios and selection probabilities in LMR on generation quality.}
    \begin{tabular}{ccccc}
        \toprule
        \bfseries Mask Ratios & \bfseries  Selection Probabilities & \bfseries UR & \bfseries VAR & \bfseries FVD \\
        \midrule
        - & - & 39.2 & 30.6 & \textbf{471} \\
        $\{0,0.25,0.5,0.75\}$ & $\{0.6,0.1,0.15,0.15\}$ & 41.3 & 31.2 & 475 \\
        $\{0,0.25,0.5,0.75\}$ & $\{0.7,0.1,0.1,0.1\}$ & \textbf{42.9} & \textbf{32.0} & 489 \\
        $\{0,0.2,0.4,0.6\}$ & $\{0.7,0.1,0.1,0.1\}$ & 39.8 & 31.0 & 512 \\
        $\{0,0.25,0.5,0.75\}$ & $\{0.8,0.1,0.05,0.05\}$ & 39.6 & 31.0 & 476 \\
        \bottomrule
    \end{tabular}
    \label{tab:lmr_hyperparam}
\end{table}

\section{Impact of LMR Mask Hyperparameters}
\label{sec:lmr_hyperparameters}
We analyze the sensitivity of LMR to two hyperparameters: the mask ratios and their selection probabilities, results are shown in Table~\ref{tab:lmr_hyperparam}. The first row corresponds to the baseline with LCR only, while all other rows report results with both LCR and LMR enabled. The first row denotes the baseline without LMR. Diffusion models are trained on $512\times 512$ images for 40k steps, followed by finetuning on $17\times 512\times 512$ videos for 30k steps. The VAE here is trained at a resolution of $256\times 256$ for 150k steps, consistent with our $256\times 256$ training setup in Tab.~\ref{tab:main_results}.

Across different ratio sets and probability allocations, LMR consistently outperforms the baseline without LMR, indicating its robustness to variations in the mask scheduling. As shown from the second to the last row in Tab.~\ref{tab:lmr_hyperparam}, as the expected mask ratio decreases during training, both UR and VAR first increase and then decrease. This pattern aligns with intuition: an excessively low expected mask ratio fails to encourage sufficient few-mode bias, while over-emphasizing large mask ratios indeed strengthens the few-mode bias but tends to compromise reconstruction quality, thereby degrading UR and VAR. Based on these observations, we adopt $\{0, 0.25, 0.5, 0.75\}$ as mask ratios with selection probabilities $\{0.7, 0.1, 0.1, 0.1\}$ as the default configuration, achieving a balance between reconstruction fidelity and few-mode bias.

\section{CPR/LMR Residual CEV Diagnostic}
\label{sec:cpr_lmr_residual}
To further compare LMR with Channel-wise Progressive Reconstruction (CPR), we apply each perturbation to 48-ch baseline latents and measure the channel covariance of the resulting corruption residuals, namely the latent differences induced by each perturbation. Tab.~\ref{tab:cpr_lmr_residual} reports the cumulative explained variance (CEV) of these residuals.
\begin{table}[t]
    \centering
    \caption{Channel covariance of CPR/LMR corruption residuals.}
    \begin{tabular}{lcc}
        \toprule
        Method & CEV@1 & CEV@3\\
        \midrule
        CPR & 0.42 & 0.61\\
        LMR & 0.40 & 0.69\\
        \bottomrule
    \end{tabular}
    \label{tab:cpr_lmr_residual}
\end{table}

LMR has a higher CEV@3 than CPR, indicating broader coverage over the leading residual subspace. This complements the PCA comparison in the main paper: CPR is more concentrated on the top mode, while LMR spreads corruption across multiple leading modes. Such a residual pattern better matches LMR's goal of promoting a controlled few-mode bias rather than only amplifying the single dominant direction.

\begin{figure*}[t]
    \centering
    \includegraphics[width=\linewidth]{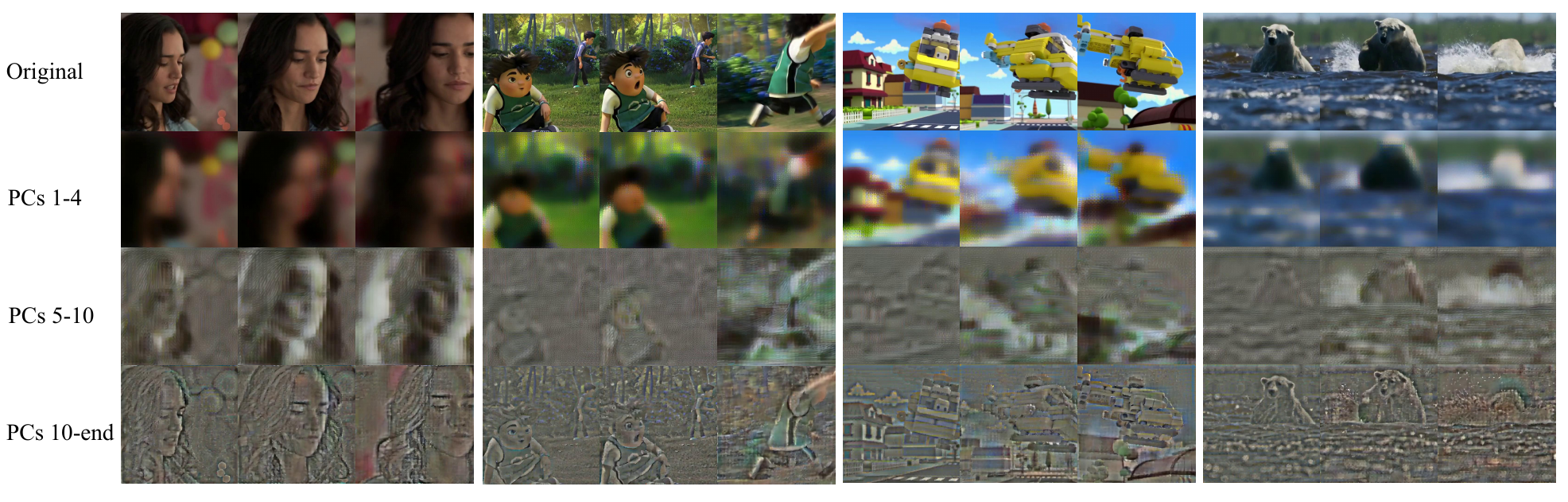}
    \caption{Visualization of information encoded in different principal components (PC).}
    \label{fig:PCs}
\end{figure*}
\section{Visualization of Latent Principal Components}
\label{sec:Principal}
To better understand the role of principal components (PC) in the channel autocorrelation matrix of the clean latent, we visualize in Fig.~\ref{fig:PCs} the information encoded by different principal components (eigenvectors). Specifically, we first perform eigendecomposition on the autocorrelation matrix to obtain the eigenvalues and their corresponding eigenvectors, sorted in descending order. For each VAE latent, we then project it onto the subspace spanned by a subset of eigenvectors (e.g., the first 1–4 vectors), and the projected latent is then fed into the VAE decoder for reconstruction. 

It can be observed that eigenvectors corresponding to larger eigenvalues (PCs 1–4) primarily represent low-frequency components in the video, such as blurred contours, coarse motion, and color. Additionally, eigenvectors associated with smaller eigenvalues (PCs 10–end) encode high-frequency edge features and motion information. Therefore, it is important for diffusion models to learn both high-eigenvalue and low-eigenvalue modes.

\section{More Generation Results}
\label{sec:visualization}
\begin{figure*}[t]
    \centering
    \includegraphics[width=\linewidth]{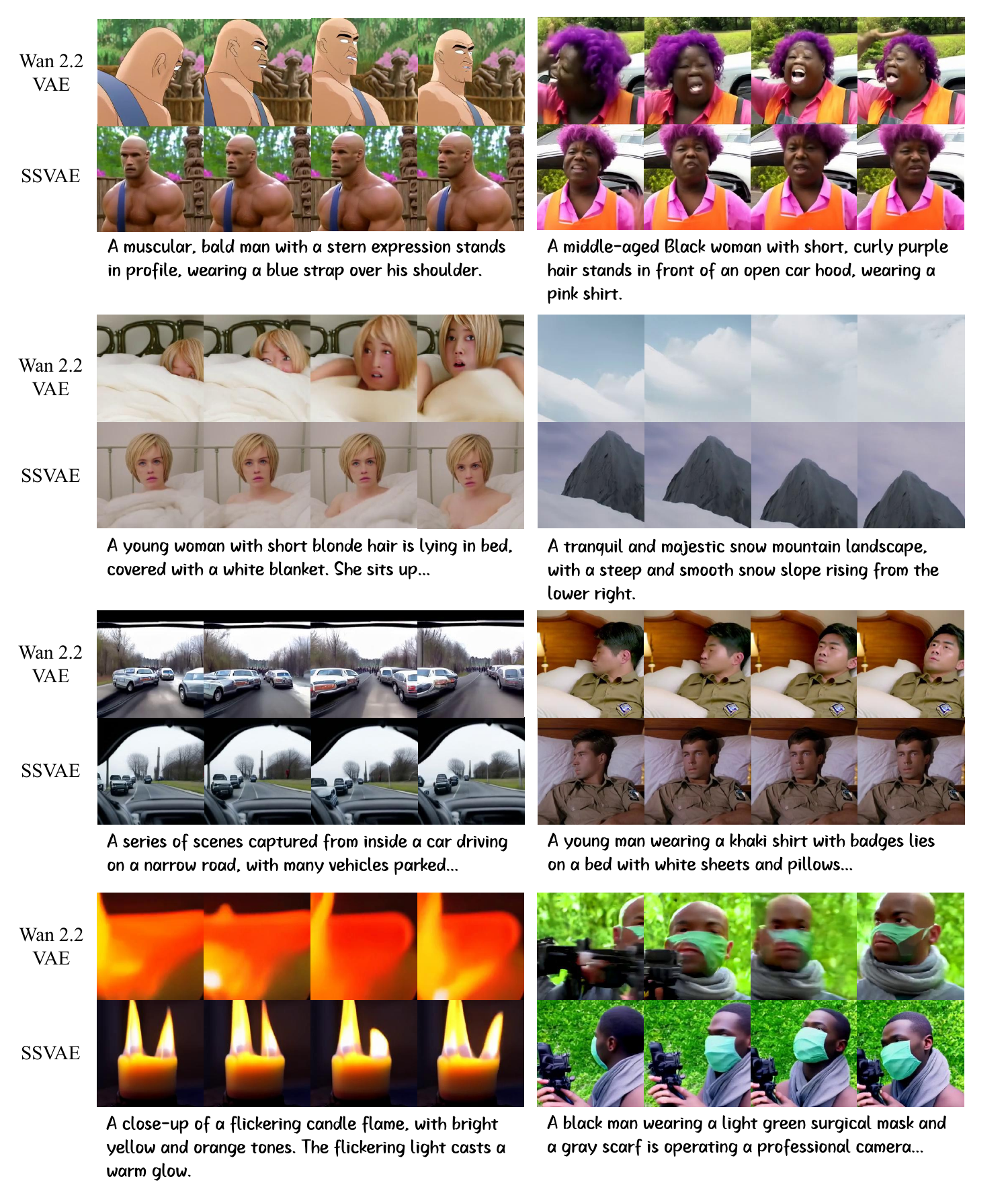}
    \caption{Qualitative comparisons with a 1.3B diffusion model at $81\times256\times256$. 
    For each prompt, we show frames sampled at equal temporal intervals for models trained with SSVAE (ours) and Wan~2.2 VAE under identical inference settings.}
    \label{fig:1.3B_5s}
\end{figure*}

\begin{figure*}[t]
    \centering
    \includegraphics[width=\linewidth]{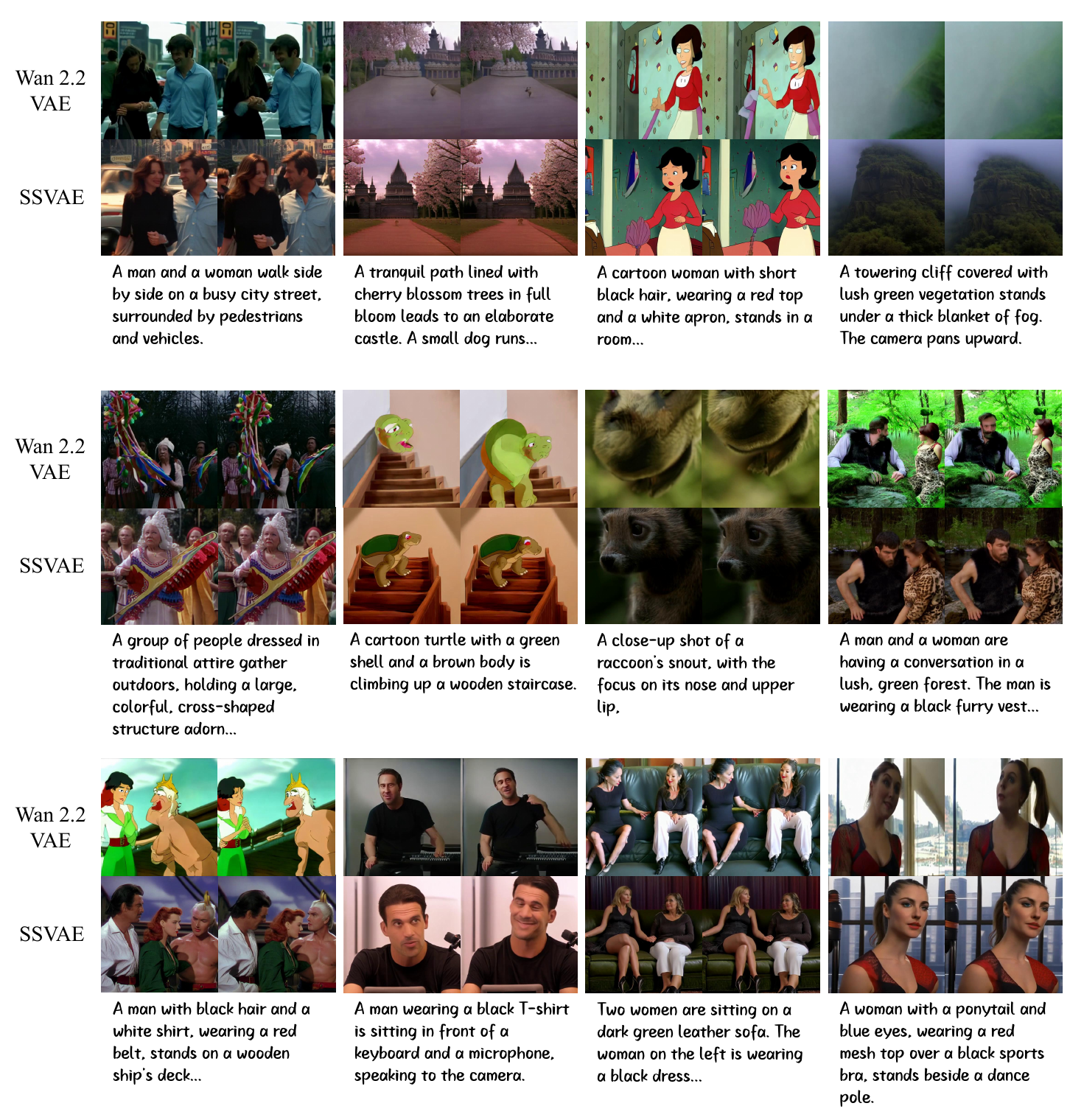}
    \caption{Side-by-side comparisons with a 4B diffusion model at $17\times512\times512$. The diffusion model trained on our VAE produces videos with reduced artifacts and better text-video alignment.}
    \label{fig:4B}
\end{figure*}

Fig.~\ref{fig:1.3B_5s} and Fig.~\ref{fig:4B} provide visualizations of two distinct generation settings. The video samples can also be found in the \texttt{gen\_videos\_1.3B\_5s} folder and \texttt{gen\_videos\_4B\_2s} of the supplementary material. With the 1.3B model at $81\times256\times256$ (Fig.~\ref{fig:1.3B_5s}), training on SSVAE reduces temporal flicker and ghosting and preserves object contours over the 5s horizon. With the 4B model at $17\times512\times512$ (Fig.~\ref{fig:4B}), side-by-side comparisons against Wan~2.2 VAE show consistent gains in motion smoothness, edge fidelity, and text rendering across diverse prompts.
Overall, the diffusion model trained on our SSVAE exhibits fewer artifacts and improved text-video alignment compared to the model trained on Wan 2.2 VAE.

\section{Computational Overhead of LCR and LMR}
\label{sec:Computation}
To further substantiate the efficiency of our regularizers, we report their wall-clock overhead measured during training in Tab.~\ref{tab:overhead}. We report the average per-step latency (in milliseconds) under the same training configuration. Overall, both regularizers introduce negligible latency: Overhead: LCR, LMR add negligible latency (1.1 \& 0.6 ms vs.\ baseline 559.8 ms).
\begin{table}[h]
  \centering
  \small
  \caption{Wall-clock overhead of LCR and LMR measured during training.}
  \label{tab:overhead}
  \begin{tabular}{lc}
    \toprule
    Method & step (ms) \\
    \midrule
    Baseline & 559.8  \\
    + LCR only & 560.9 \\
    + LMR only & 560.4 \\
    + LCR + LMR & 561.5\\
    \bottomrule
  \end{tabular}
\end{table}

\section{Prompts for UnifiedReward-Thinking}
\label{sec:Prompts}
We use UnifiedReward-Thinking to compute the UnifiedReward scores. Specifically, we uniformly sample eight frames from each video, resize them to $512\times 512$, and evaluate the generated videos from five aspects: Visual Quality, Temporal Consistency, Dynamic Degree, Text-to-Video Alignment, and Factual Consistency. The final score is a weighted average across all evaluation dimensions. The prompt used in our assessment is shown in Fig.~\ref{fig:prompt}:
\begin{figure*}[htb]
\centering
\begin{tcolorbox}[colback=gray!5!white, colframe=gray!80!black, boxrule=0.8pt, arc=3pt, left=1mm, right=1mm, top=1mm, bottom=1mm, width=\textwidth, fontupper=\ttfamily]
Suppose you are an expert in judging and evaluating the quality of AI-generated videos, please watch the frames of a given video and see the text prompt for generating the video. Then give scores from 5 different dimensions:

(1) visual quality: the quality of the video in terms of clearness, resolution, brightness, and color

(2) temporal consistency, the consistency of objects or humans in video

(3) dynamic degree, the degree of dynamic changes

(4) text-to-video alignment, the alignment between the text prompt and the video content

(5) factual consistency, the consistency of the video content with the common-sense and factual knowledge

For each evaluation dimension, provide a score between 1-10 for the video and provide a concise rationale for the score. Calculate the total score for each video by summing all dimension scores. Use a chain-of-thought process to detail your reasoning steps, and enclose all your detailed reasoning within <think> and </think> tags. Then, in the <answer> tag, output the final score in the following format: `Final score: 6'. No additional text is allowed in the <answer> section.

Example output format:

<think>

1. Visual quality: 9/10 - ...; Temporal consistency: 8/10 - ...; Dynamic degree: 7/10 - ...; Text-to-video alignment: 6/10 - ...; Factual consistency: 5/10 - ...

Total score:

9+8+7+6+5=35

</think>

<answer>Final score: 35</answer>

**Note: In the example above, scores and the final answer are placeholders meant only to demonstrate the format. Your actual evaluation should be based on the quality of the given video.

Your task is provided as follows:

Text Prompt:
\end{tcolorbox}
\caption{Prompt used for video quality evaluation.}
\label{fig:prompt}
\end{figure*}

\end{document}